\relax
\documentclass[letterpaper]{article} 
\usepackage{aaai20}  
\usepackage{times}  
\usepackage{helvet} 
\usepackage{courier}  
\usepackage[hyphens]{url}  
\usepackage{graphicx} 
\urlstyle{rm} 
\usepackage{graphicx}  
\frenchspacing  
\setlength{\pdfpagewidth}{8.5in}  
\setlength{\pdfpageheight}{11in}  

\usepackage{amsmath,amsfonts,amssymb,amsthm}
\usepackage{mathtools}
\usepackage{pifont}
\newcommand{\cmark}{\ding{51}}

\usepackage{graphicx}

\newtheorem{theorem}{Theorem}
\newtheorem{definition}{Definition}
\newtheorem{proposition}{Proposition}

\newtheorem{lemma}{Lemma}

\usepackage[bottom]{footmisc}
\usepackage{color}
\usepackage{enumitem}
\usepackage{multirow}
\usepackage{booktabs}
\usepackage{appendix}
\usepackage{subfig}
\usepackage{soul}

\usepackage[linesnumbered,ruled]{algorithm2e}
\newcommand{\V}{\mathcal{V}}
\newcommand{\E}{\mathcal{E}}

 \pdfinfo{
/Title (ASAP: Adaptive Structure Aware Pooling for Learning Hierarchical Graph Representations)
/Author (Ekagra Ranjan, Soumya Sanyal, Partha Pratim Talukdar)
} 

\setcounter{secnumdepth}{2} 

%
\setlength\titlebox{2.5in} 
\title{ASAP: Adaptive Structure Aware Pooling for Learning Hierarchical Graph Representations}
\begin{document}

\author{
	Ekagra Ranjan\textsuperscript{1}\thanks{ Research done during internship at Indian Institute of Science, Bangalore. },
	Soumya Sanyal\textsuperscript{2},
	Partha Talukdar\textsuperscript{2}\\
	\textsuperscript{1}{Indian Institute of Technology, Guwahati}\\
	\textsuperscript{2}{Indian Institute of Science, Bangalore}\\\
	ekagra.ranjan@gmail.com, \{soumyasanyal, ppt\}@iisc.ac.in
}

\maketitle

\begin{abstract}
Graph Neural Networks (GNN) have been shown to work effectively for modeling graph structured data to solve tasks such as node classification, link prediction and graph classification. There has been some recent progress in defining the notion of pooling in graphs whereby the model tries to generate a graph level representation by downsampling and summarizing the information present in the nodes. Existing pooling methods either fail to effectively capture the graph substructure or do not easily scale to large graphs. In this work, we propose ASAP (Adaptive Structure Aware Pooling), a sparse and differentiable pooling method that addresses the limitations of previous graph pooling architectures. ASAP utilizes a novel self-attention network along with a modified GNN formulation to capture the importance of each node in a given graph. It also learns a sparse soft cluster assignment for nodes at each layer to effectively pool the subgraphs to form the pooled graph. Through extensive experiments on multiple datasets and theoretical analysis, we motivate our choice of the components used in ASAP. Our experimental results show that combining existing GNN architectures with ASAP leads to state-of-the-art results on multiple graph classification benchmarks. ASAP has an average improvement of 4\%, compared to current sparse hierarchical state-of-the-art method.
\end{abstract}

\begin{figure*}[!ht]
	\includegraphics[width=1\textwidth]{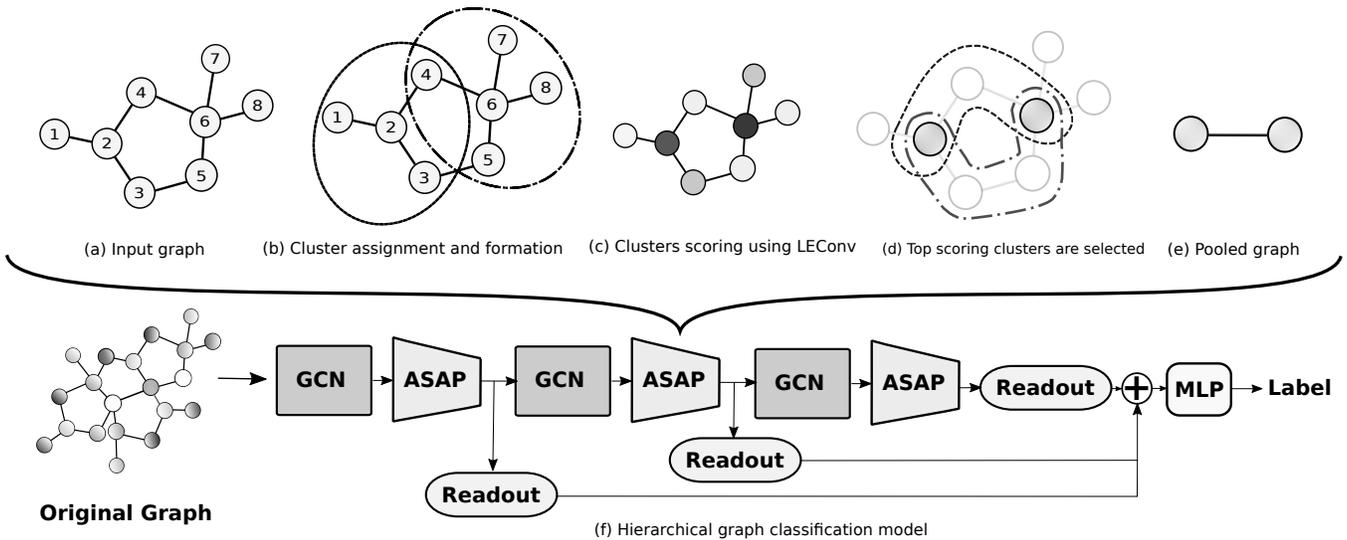}
	\caption{\label{fig:asap} Overview of ASAP: (a) Input graph to ASAP. (b) ASAP initially clusters $1$-hop neighborhood considering all nodes as medoid\protect\footnotemark. For brevity, we only show cluster formations of nodes 2 \& 6 as medoids. Cluster membership is computed using M2T attention (refer Sec. \ref{ssec:node-aggregation}). (c) Clusters are scored using LEConv (refer Sec. \ref{ssec:cluster-selection}). Darker shade denotes higher score. (d) A fraction of top scoring clusters are selected in the pooled graph. Adjacency matrix is recomputed using edge weights between the member nodes of selected clusters. (e) Output of ASAP (f) Overview of hierarchical graph classification architecture.}
\end{figure*}

\section{Introduction}
In recent years, there has been an increasing interest in developing Graph Neural Networks (GNNs) for graph structured data.  CNNs have shown to be successful in tasks involving images \cite{alex,resnet} and text \cite{conv-text}. Unlike these regular grid data, arbitrary shaped graphs have rich information present in their graph structure. By inherently capturing such information through message propagation along the edges of the graph, GNNs have proved to be more effective for graphs \cite{message_passing,sage}. GNNs have been successfully applied in tasks such as semantic role labeling \cite{gcn_srl}, relation extraction \cite{gcn_reside}, neural machine translation \cite{gcn_nmt}, document dating \cite{gcn_document}, and molecular feature extraction \cite{gcn_molecule}. While some of the works focus on learning node-level representations to perform tasks such as node classification \cite{gcn,gat} and link prediction \cite{rgcn,compgcn}, others focus on learning graph-level representations for tasks like graph classification \cite{spectral-gcn-1,spectral-gcn-2,diffpool,topk,sag} and graph regression \cite{cgcnn,mtcgcnn}. In this paper, we focus on graph-level representation learning for the task of graph classification.

Briefly, the task of graph classification involves predicting the label of an input graph by utilizing the given graph structure and initial node-level representations. For example, given a molecule, the task could be to predict if it is toxic. Current GNNs are inherently \emph{flat} and lack the capability of aggregating node information in a \emph{hierarchical} manner. Such architectures rely on learning node representations through some GNN followed by aggregation of the node information to generate the graph representation \cite{set2set,glob-att,sortpool}. But learning graph representations in a hierarchical manner is important to capture local substructures that are present in graphs. For example, in an organic molecule, a set of atoms together can act as a functional group and play a vital role in determining the class of the graph.

To address this limitation, new pooling architectures have been proposed where sets of nodes are recursively aggregated to form a cluster that represents a node in the pooled graph, thus enabling hierarchical learning. DiffPool \cite{diffpool} is a differentiable pooling operator that learns a soft assignment matrix mapping each node to a set of clusters. Since this assignment matrix is \emph{dense}, it is not easily scalable to large graphs \cite{topk2}. Following that, TopK \cite{topk} is proposed which learns a scalar projection score for each node and selects the top $k$ nodes. They address the sparsity concerns of DiffPool but are unable to capture the rich graph structure effectively. Recently, SAGPool \cite{sag}, a TopK based architecture, has been proposed which leverages self-attention network to learn the node scores. Although local graph structure is used for scoring nodes, it is still not used effectively in determining the connectivity of the pooled graph. Pooling methods that leverage the graph structure effectively while maintaining sparsity currently don't exist. We address the gap in this paper.


In this work, we propose a new sparse pooling operator called Adaptive Structure Aware Pooling (ASAP) which overcomes the limitations in current pooling methods. Our contributions can be summarized as follows:
\begin{itemize}
	\item We introduce ASAP, a sparse pooling operator capable of capturing local subgraph information hierarchically to learn global features with better edge connectivity in the pooled graph.
	\item We propose Master2Token (M2T), a new self-attention framework which is better suited for global tasks like pooling.
	\item We introduce a new convolution operator LEConv, that can adaptively learn functions of local extremas in a graph substructure.
\end{itemize}

\footnotetext{medoids are representatives of a cluster. They are similar to centroids but are strictly a member of the cluster.}

\section{Related Work}

\subsection{Graph Neural Networks}
Various formulation of GNNs have been proposed which use both spectral and non-spectral approaches. Spectral methods \cite{spectral-gcn-1,spectral-gcn-2} aim at defining convolution operation using Fourier transformation and graph Laplacian. These methods do not directly generalize to graphs with different structure \cite{spec-not-general}. Non-spectral methods \cite{deterministic-graph-cluster-1,gcn,gin,monet,gConv} define convolution through a local neighborhood around nodes in the graph. They are faster than spectral methods and easily generalize to other graphs. GNNs can also be viewed as \textit{message passing} algorithm where nodes iteratively aggregate messages from neighboring nodes through edges \cite{message_passing}.

\subsection{Pooling}
Pooling layers overcome GNN's inability to aggregate nodes hierarchically. Earlier pooling methods focused on deterministic graph clustering algorithms \cite{deterministic-graph-cluster-1,deterministic-graph-cluster-2,deterministic-graph-cluster-3}. \citeauthor{diffpool} introduced the first differentiable pooling operator which out-performed the previous deterministic methods. Since then, new data-driven pooling methods have been proposed; both spectral \cite{eigen-pool,graclus} and non-spectral \cite{diffpool,topk}. Spectral methods aim at capturing the graph topology using eigen-decomposition algorithms. However, due to higher computational requirement for spectral graph techniques, they are not easily scalable to large graphs. Hence, we focus on non-spectral methods.

Pooling methods can further be divided into global and hierarchical pooling layers. Global pooling summarize the entire graph in just one step. Set2Set \cite{set2set} finds the importance of each node in the graph through iterative content-based attention. Global-Attention \cite{glob-att} uses an attention mechanism to aggregate nodes in the graph. SortPool \cite{sortpool} summarizes the graph by concatenating few nodes after sorting them based on their features. Hierarchical pooling is used to capture the topological information of graphs. \textbf{DiffPool} forms a fixed number of clusters by aggregating nodes. It uses GNN to compute a dense soft assignment matrix, making it infea-
\begin{table}[!tbh]\
	\renewcommand{\arraystretch}{1.20}
	\centering
	\resizebox{\columnwidth}{!}{
		\begin{tabular}{lcccc}
			\toprule
			\textbf{Property} & \textbf{DiffPool} & \textbf{TopK} & \textbf{SAGPool} & \textbf{ASAP} \\
			\cmidrule{1-5}
			Sparse &  & \cmark & \cmark & \cmark \\
			Node Aggregation & \cmark &  &  & \cmark \\
			Soft Edge Weights & \cmark &  &  & \cmark \\
			Variable number of clusters &  & \cmark & \cmark & \cmark \\
			\bottomrule
		\end{tabular}
	}
	\caption{\label{tab:property} Properties desired in hierarchical pooling methods.}
\end{table}
	
\noindent sible for large graphs. \textbf{TopK} scores nodes based on a learnable projection vector and samples a fraction of high scoring nodes. It avoids node aggregation and computing soft assignment matrix to maintain the sparsity in graph operations. \textbf{SAGPool} improve upon TopK by using a GNN to consider the graph structure while scoring nodes. Since TopK and SAGPool do not aggregate nodes nor compute soft edge weights, they are unable to preserve node and edge information effectively. 

To address these limitations, we propose ASAP, which has all the desirable properties of hierarchical pooling without compromising on sparsity in graph operations. Please see Table. \ref{tab:property} for an overall comparison of hierarchical pooling methods. Further comparison discussions between hierarchical architectures are presented in Sec. \ref{sec:discussion_comp}.

\section{Preliminaries}
\subsection{Problem Statement}
Consider a graph $G(\mathcal{V}, \mathcal{E}, X)$ with $N = |\mathcal{V}|$ nodes and $|\mathcal{E}|$ edges. Each node $v_{i} \in \mathcal{V}$ has $d$-dimensional feature representation denoted by $x_i$. $X \in \mathbb{R}^{N \times d}$ denotes the node feature matrix and $A \in \mathbb{R}^{N \times N}$ represents the weighted adjacency matrix. The graph $G$ also has a label $y$ associated with it. Given a dataset $D = \{(G_{1}, y_{1}),(G_{2}, y_{2}),...\}$, the task of graph classification is to learn a mapping $f:\mathcal{G} \rightarrow \mathcal{Y}$, where $\mathcal{G}$ is the set of input graphs and $\mathcal{Y}$ is the set of labels associated with each graph. A pooled graph is denoted by $G^{p}(\mathcal{V}^{p}, \mathcal{E}^{p}, X^p)$ with node embedding matrix $X^{p}$ and its adjacency matrix as $A^{p}$.

\subsection{Graph Convolution Networks} We use Graph Convolution Network (GCN) \cite{gcn} for extracting discriminative features for graph classification. GCN is defined as:
\begin{equation}
\label{eq:gcn}
    X^{(l+1)} = \sigma(\hat{D}^{-\frac{1}{2}} \hat{A} \hat{D}^{\frac{1}{2}} X^{(l)} W^{(l)}),
\end{equation}
where $\hat{A} = A + I$ for self-loops, $\hat{D} = \sum_{j}\hat{A}_{i,j}$ and $W^{(l)} \in \mathbb{R}^{d \times f}$ is a learnable matrix for any layer $l$. We use the initial node feature matrix wherever provided, i.e., $X^{(0)} = X$.

\subsection{Self-Attention}
\label{sec:self_attn}
Self-attention is used to find the dependency of an input on itself \cite{SA-cheng,vaswani}. An alignment score $\alpha_{i,j}$ is computed to map the importance of candidates $c_{j}$ on target query $q_{i}$. In self-attention, target query $q_{i}$ and candidates $c_{j}$ are obtained from input entities $\boldsymbol{h}=\{h_{1},...,h_{n}\}$. Self-attention can be categorized as Token2Token and Source2Token based on the choice of target query $q$ \cite{disan}.

\subsubsection{Token2Token (T2T)} selects both the target and candidates from the input set $\boldsymbol{h}$. In the context of additive attention \cite{bahdanau}, $\alpha_{i,j}$ is computed as:
\begin{equation}
\label{eq:t2t-add}
    \alpha_{i, j} = softmax(\Vec{v}^{T}\sigma(W h_{i} \mathbin\Vert W h_{j})).
\end{equation}
where $\mathbin\Vert$ is the concatenation operator.

\subsubsection{Source2Token (S2T)} finds the importance of each candidate to a specific global task which cannot be represented by any single entity. $\alpha_{i,j}$ is computed by dropping the target query term. Eq. \eqref{eq:t2t-add} changes to the following:
\begin{equation}
\label{eq:s2t-add}
    \alpha_{i,j} = softmax(\Vec{v}^{T}\sigma(W h_{j})).
\end{equation}

\subsection{Receptive Field} 

We extend the concept of receptive field $RF$ from pooling operations in CNN  to GNN\footnote{Please refer to Appendix Sec. \ref{ssec:cnn-asap} for more details on similarity between pooling methods in CNN and ASAP.}. We define $RF^{node}$ of a pooling operator as the number of hops needed to cover all the nodes in the neighborhood that influence the representation of a particular output node. Similarly, $RF^{edge}$ of a pooling operator is defined as the number of hops needed to cover all the edges in the neighborhood that affect the representation of an edge in the pooled graph $\mathcal{G}^{p}$.


\section{ASAP: Proposed Method}
\label{sec:proposed-method}

In this section we describe the components of our proposed method ASAP. As shown in Fig. \ref{fig:asap}(b), ASAP initially considers all possible local clusters with a fixed receptive field for a given input graph. It then computes the cluster membership of the nodes using an attention mechanism. These clusters are then scored using a GNN as depicted in Fig \ref{fig:asap}(c). Further, a fraction of the top scoring clusters are selected as nodes in the pooled graph and new edge weights are computed between neighboring clusters as shown in Fig. \ref{fig:asap}(d). Below, we discuss the working of ASAP in details. Please refer to Appendix Sec. \ref{sec:pseudocode} for a pseudo code of the working of ASAP.

\subsection{Cluster Assignment}
\label{ssec:cluster-assignment}
Initially, we consider each node $v_{i}$ in the graph as a \textit{medoid} of a cluster $c_{h}(v_{i})$ such that each cluster can represent only the local neighbors $\mathcal{N}$ within a fixed radius of $h$ hops i.e., $c_{h}(v_{i}) = \mathcal{N}_h(v_{i})$. This effectively means that $RF^{node}=h$ for ASAP. This helps the clusters to effectively capture the information present in the graph sub-structure.


\noindent Let $x_{i}^{c}$ be the feature representation of a cluster $c_{h}(v_{i})$ centered at $v_i$. We define $G^{c}(\mathcal{V}, \mathcal{E}, X^c)$ as the graph with node feature matrix $X^{c} \in \mathbb{R}^{N \times d}$ and adjacency matrix $A^{c} = A$. We denote the cluster assignment matrix by $S \in \mathbb{R}^{N \times N}$, where $S_{i,j}$ represents the membership of node $v_{i} \in \mathcal{V}$ in cluster $c_{h}(v_{j})$. By employing such local clustering \cite{satu}, we can maintain sparsity of the cluster assignment matrix $S$ similar to the original graph adjacency matrix $A$ i.e., space complexity of both $S$ and $A$ is $\mathcal{O}(|\mathcal{E}|)$.

\subsection{Cluster Formation using Master2Token}
\label{ssec:node-aggregation}
Given a cluster $c_{h}(v_{i})$, we learn the cluster assignment matrix $S$ through a self-attention mechanism. The task here is to learn the overall representation of the cluster $c_{h}(v_{i})$ by attending to the relevant nodes in it.  We observe that both T2T and S2T attention mechanisms described in Sec. \ref{sec:self_attn} do not utilize any intra-cluster information. Hence, we propose a new variant of self-attention called \textbf{Master2Token (M2T)}. We further motivate the need for M2T framework later in Sec. \ref{ssec:m2t_compare}. In M2T framework, we first create a master query $m_{i} \in \mathbb{R}^{d}$ which is representative of all the nodes within a cluster:
\begin{equation}
    m_{i} = f_{m}(x_j' | v_j \in c_h(v_i) \}),
\end{equation}
where $x_j'$ is obtained after passing $x_j$ through a separate GCN to capture structural information in the cluster $c_{h}(v_{i})$ \footnote{If $x_j$ is used as it is then interchanging any two nodes in a cluster will have not affect the final output, which is undesirable.}. $f_{m}$ is a master function which combines and transforms feature representation of $v_{j}\in c_{h}(v_{i})$ to find $m_i$. In this work we experiment with $max$ master function defined as:

\begin{equation}
    m_i = \max_{v_{j} \in c_{h}(v_{i})}(x_{j}').
\end{equation}
This master query $m_{i}$ attends to all the constituent nodes $v_{j} \in c_{h}(v_{i})$ using additive attention:

\begin{equation}
\label{eq:m2t-add}
\alpha_{i, j} = softmax(\Vec{w}^{T}\sigma(W m_{i} \mathbin\Vert x_{j}')).
\end{equation}

where $\Vec{w}^{T}$ and $W$ are learnable vector and matrix respectively. The calculated attention scores $\alpha_{i,j}$ signifies the membership strength of node $v_{j}$ in cluster $c_{h}(v_{i})$. Hence, we use this score to define the cluster assignment matrix discussed above, i.e., $S_{i,j} = \alpha_{i,j}$. The cluster representation $x_{i}^{c}$ for ${c_{h}(v_i)}$ is computed as follows:
\begin{equation}
\label{eq:cluster_repr}
    x_{i}^{c} = \sum_{j=1}^{|c_{h}(v_{i})|} \alpha_{i,j} x_{j}.
\end{equation}

\subsection{Cluster Selection using LEConv}
\label{ssec:cluster-selection}
Similar to TopK \cite{topk}, we sample clusters based on a cluster fitness score $\phi_{i}$ calculated for each cluster in the graph $G^c$ using a fitness function $f_{\phi}$. For a given pooling ratio $k \in (0, 1]$, the top $\lceil kN\rceil$ clusters are selected and included in the pooled graph $G^p$. To compute the fitness scores, we introduce \textbf{Local Extrema Convolution (LEConv)}, a graph convolution method which can capture local extremum information. In Sec. \ref{ssec:leconv-use} we motivate the choice of LEConv's formulation and contrast it with the standard GCN formulation. LEConv is used to compute $\phi$ as follows:
\begin{equation}
\label{eq:leconv}
    \phi_{i} = \sigma(x_{i}^c W_{1} + \sum_{j \in \mathcal{N}(i)} A_{i,j}^c (x_{i}^c W_{2} - x_{j}^c W_{3}))
\end{equation}
where $\mathcal{N}(i)$ denotes the neighborhood of the $i^{th}$ node in $G^c$. $W_{1}, W_{2}, W_{3}$ are learnable parameters and $\sigma(.)$ is some activation function. Fitness vector ${\Phi} = [\phi_{1}, \phi_{2},...,\phi_{N}]^{T}$ is multiplied to the cluster feature matrix $X^{c}$ to make $f_{\phi}$ learnable i.e.,:
\begin{equation}
    \nonumber
    \hat{X^{c}} = {\Phi} \odot X^{c},
\end{equation}
where $\odot$ is broadcasted hadamard product. The function $\text{TOP}_k(.)$ ranks the fitness scores and gives the indices $\hat{i}$ of top $\lceil kN \rceil$ selected clusters in $G^{c}$ as follows:
\begin{equation}
\nonumber
    \hat{i} = \text{TOP}_k(\hat{X^{c}}, \lceil kN \rceil).
\end{equation}
The pooled graph $G^p$ is formed by selecting these top $\lceil kN \rceil$ clusters. The pruned cluster assignment matrix $\hat{S} \in \mathbb{R}^{N \times \lceil kN \rceil}$ and the node feature matrix $X^{p} \in \mathbb{R}^{\lceil kN \rceil \times d}$ are given by:
\begin{equation}
\label{eq:x-pool}   
    \hat{S} = S(:, \hat{i}), \hspace{1cm} X^{p} = \hat{X}^{c}(\hat{i}, :)
\end{equation}
where $\hat{i}$ is used for index slicing.

\begin{table*}[tbh]\
	\centering
	\resizebox{\textwidth}{!}{
		\begin{tabular}{lccccc}
			\toprule
			Method & \textsc{D\&D} & \textsc{PROTEINS} & \textsc{NCI1} & \textsc{NCI109} & \textsc{FRANKENSTEIN}\\
			\midrule
			\textsc{Set2Set} \cite{set2set}  			& 71.60 $\pm$ 0.87 & 72.16 $\pm$ 0.43 & 66.97 $\pm$ 0.74 & 61.04 $\pm$ 2.69 & 61.46 $\pm$ 0.47\\
			\textsc{Global-Attention} \cite{glob-att}  	& 71.38 $\pm$ 0.78 & 71.87 $\pm$ 0.60 & 69.00 $\pm$ 0.49 & 67.87 $\pm$ 0.40 & 61.31 $\pm$ 0.41\\
			\textsc{SortPool} \cite{sortpool}  			& 71.87 $\pm$ 0.96 & 73.91 $\pm$ 0.72 & 68.74 $\pm$ 1.07 & 68.59 $\pm$ 0.67 & 63.44 $\pm$ 0.65\\
			\midrule
			\textsc{Diffpool} \cite{diffpool}  			& 66.95 $\pm$ 2.41 & 68.20 $\pm$ 2.02 & 62.32 $\pm$ 1.90 & 61.98 $\pm$ 1.98 & 60.60 $\pm$ 1.62\\
			\textsc{TopK} \cite{topk} 					& 75.01 $\pm$ 0.86 & 71.10 $\pm$ 0.90 & 67.02 $\pm$ 2.25 & 66.12 $\pm$ 1.60 & 61.46 $\pm$ 0.84\\
			\textsc{SAGPool} \cite{sag}  				& 76.45 $\pm$ 0.97 & 71.86 $\pm$ 0.97 & 67.45 $\pm$ 1.11  & 67.86 $\pm$ 1.41 & 61.73 $\pm$ 0.76\\
			\midrule
			\textsc{ASAP} (Ours) & $\mathbf{76.87 \pm 0.7}$ & $\mathbf{74.19 \pm 0.79}$ & $\mathbf{71.48 \pm 0.42}$ & $\mathbf{70.07 \pm 0.55}$ & $\mathbf{66.26 \pm 0.47}$\\
			\bottomrule
		\end{tabular}
	}
	\caption{\label{tab:comparison} Comparison of ASAP with previous global and hierarchical pooling. Average accuracy and standard deviation is reported for 20 random seeds. We observe that ASAP consistently outperforms all the baselines on all the datasets. Please refer to Sec. \ref{sec:results} for more details.}
\end{table*}


\subsection{Maintaining Graph Connectivity}
\label{ssec:graph-connectivity}
Following \cite{diffpool}, once the clusters have been sampled, we find the new adjacency matrix $A^{p}$ for the pooled graph $G^p$ using $\hat{A}^c$ and $\hat{S}$ in the following manner:
\begin{equation}
\label{eq:stas}
    A^{p} = \hat{S}^{T} \hat{A}^c \hat{S}
\end{equation}
where $\hat{A}^c = A^c + I$. Equivalently, we can see that $A^{p}_{i,j} = \sum_{k,l} \hat{S}_{k,i} \hat{A}^c_{k,l} \hat{S}_{l,j}$. This formulation ensures that any two clusters $i$ and $j$ in $G^{p}$ are connected if there is any common node in the clusters $c_{h}(v_{i})$ and $c_{h}(v_{j})$ or if any of the constituent nodes in the clusters are neighbors in the original graph $G$ (Fig. \ref{fig:asap}(d)). Hence, the strength of the connection between clusters is determined by both the membership of the constituent nodes through $\hat{S}$ and the edge weights $A^c$. Note that $\hat{S}$ is a sparse matrix by formulation and hence the above operation can be implemented efficiently.

\section{Theoretical Analysis}
\subsection{Limitations of using GCN for scoring clusters}

\label{ssec:leconv-use}
GCN from Eq. \eqref{eq:gcn} can be viewed as an operator which first computes a \textit{pre-score} $\hat{\phi^{\prime}}$ for each node i.e., $\hat{\phi^{\prime}} = XW$ followed by a weighted average over neighbors and a non-linearity. If for some node the pre-score is very high, it can increase the scores of its neighbors which inherently biases the pooling operator to select clusters in the local neighborhood instead of sampling clusters which represent the whole graph. Thus, selecting the clusters which correspond to local extremas of pre-score function would potentially allow us to sample representative clusters from all parts of the graph. 



\begin{theorem}
	\label{thm:gcn-score}
	Let $\mathcal{G}$ be a graph with positive adjacency matrix A i.e., $A_{i, j}\geq0$. Consider any function $f(X, A): \mathbb{R}^{N \times d} \times \mathbb{R}^{N \times N} \rightarrow \mathbb{R}^{N \times 1}$ which depends on difference between a node and its neighbors after a linear transformation $W \in \mathbb{R}^{d \times 1}$. For e.g,:
	\begin{equation}
	\nonumber
	f_{i} = \sigma(\alpha_{i}x_{i}W + \sum_{j \in \mathcal{N}(i)} \beta_{i,j} (x_{i}W - x_{j}W))
	\end{equation}
	 where $f_{i}, \alpha_{i}, \beta_{i,j} \in \mathbb{R}$ and $ x_{i} \in \mathbb{R}^{d}$.
	
	\begin{enumerate}[label=\alph*)]
		\item If fitness value $\Phi = GCN(X, A)$ then $\Phi$ cannot learn f.
		\item If fitness value $\Phi = LEConv(X, A)$ then $\Phi$ can learn f.
	\end{enumerate}
\end{theorem}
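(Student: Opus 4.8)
My plan is to read ``$\Phi$ can/cannot learn $f$'' as a statement about function classes: for a fixed graph $\mathcal{G}$ (with at least one actual edge, so the neighbor sums are nonvacuous), does there exist a choice of the convolution's learnable parameters making the induced map $X \mapsto \Phi$ coincide with $f(\cdot,A)$ on \emph{all} inputs $X$? The one structural fact I would lean on is that the GCN propagation matrix $\tilde A$ obtained from $\hat A = A+I$ (whichever normalization is used) is entrywise nonnegative, with $\tilde A_{ii}>0$ and $\tilde A_{ij}>0$ exactly on the edges, because $A_{ij}\ge 0$. Hence one GCN layer gives $\Phi_i = \sigma\!\big(\sum_j \tilde A_{ij}(x_j W_{\mathrm{GCN}})\big)$, i.e.\ a monotone nonlinearity applied to a \emph{nonnegative} combination of the scalar pre-scores $x_j W_{\mathrm{GCN}}$, all sharing the one vector $W_{\mathrm{GCN}}\in\mathbb{R}^{d\times 1}$, whereas $f$ mixes neighbor pre-scores with coefficients $-\beta_{ij}$ that can be strictly negative.

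For part (a) I would argue by contradiction. Suppose some $W_{\mathrm{GCN}}$ reproduces $f$. First rule out $W_{\mathrm{GCN}}=0$: then $\Phi\equiv\sigma(0)$ is constant, while taking $f$ with (say) all $\alpha_i=\beta_{ij}=1$, so $\gamma_i := \alpha_i+\sum_{j\in\mathcal N(i)}\beta_{ij}>0$ and $W_f\ne 0$, is nonconstant. So $W_{\mathrm{GCN}}\ne 0$. Now fix a node $i_0$ with a neighbor $j_0$ and probe both maps along two one‑parameter input families: perturb only $x_{j_0}\mapsto x_{j_0}+tW_{\mathrm{GCN}}$, and (separately) only $x_{i_0}\mapsto x_{i_0}+tW_{\mathrm{GCN}}$, $t\in\mathbb{R}$. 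Along both, the GCN pre‑activation at $i_0$ has strictly positive slope in $t$ ($\tilde A_{i_0 j_0}\|W_{\mathrm{GCN}}\|^2$ and $\tilde A_{i_0 i_0}\|W_{\mathrm{GCN}}\|^2$ respectively), so $\Phi_{i_0}$ is strictly increasing in $t$ in both cases; meanwhile $f_{i_0}$ has slope $-\beta_{i_0 j_0}\langle W_{\mathrm{GCN}},W_f\rangle$ in the first family and $\gamma_{i_0}\langle W_{\mathrm{GCN}},W_f\rangle$ in the second. With $\beta_{i_0 j_0}>0$ and $\gamma_{i_0}>0$, forcing both of these to be positive (the case $\langle W_{\mathrm{GCN}},W_f\rangle=0$ is excluded, since then $f_{i_0}$ is constant along the families while $\Phi_{i_0}$ is not) requires $\langle W_{\mathrm{GCN}},W_f\rangle<0$ and $>0$ at once — a contradiction, so no GCN layer realizes $f$. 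If one wants $\sigma=\mathrm{ReLU}$ instead of a strictly monotone $\sigma$, I would base the same perturbations at a point where the relevant pre‑activations are strictly positive (so $\sigma$ is the identity there) and use that two continuous maps disagreeing on an open set cannot be equal.

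For part (b) the argument is a one‑line construction: in the LEConv score \eqref{eq:leconv}, set $W_1 = \alpha W$ and $W_2 = W_3 = W$ for a scalar $\alpha$ and vector $W$. Then
\[
\phi_i = \sigma\Big(\alpha\, x_i^{c} W + \sum_{j\in\mathcal N(i)} A_{ij}^{c}\,(x_i^{c} - x_j^{c})\,W\Big),
\]
which is exactly of the displayed form for $f$ with $\alpha_i=\alpha$ and $\beta_{ij}=A_{ij}^{c}$ — a genuinely difference‑based, local‑extrema‑sensitive score; for $\alpha>0$ it is precisely the kind of nondegenerate $f$ that part (a) shows GCN cannot express, and keeping $W_2\ne W_3$ only adds capacity. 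Hence LEConv can learn $f$ while GCN cannot. I expect the main obstacle to be part (a): not the computation but pinning down the precise sense in which GCN ``cannot learn'' $f$ and making the monotonicity/sign argument robust to the nonlinearity $\sigma$, to the fact that the neighbor pre‑scores are coupled through the shared $W_{\mathrm{GCN}}$ (handled by perturbing \emph{along} $W_{\mathrm{GCN}}$, and then testing against $W_f$), and to arbitrary topology and edge weights; part (b) is immediate once (a) fixes the right notion of the $f$‑family.
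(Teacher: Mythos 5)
Your proof is correct and follows essentially the same route as the paper: part (a) rests on the fact that the GCN propagation coefficients are nonnegative while $f$ requires a strictly negative coefficient on each neighbor's pre-score, and part (b) is the same explicit parameter instantiation (taking $W_2=W_3$ and $W_1$ proportional to $W$) that recovers $f$ with $\beta_{i,j}=A_{i,j}$. Your perturbation argument along $W_{\mathrm{GCN}}$ merely makes rigorous what the paper asserts in one line (``since $A_{i,j}\ge 0$, $\phi_i$ cannot have a term of the form $\beta_{i,j}(x_iW-x_jW)$''), so it is a gain in precision rather than a different idea.
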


\begin{proof}
See Appendix Sec. \ref{ssec:gconv-proof} for proof.
\end{proof}
\noindent Motivated by the above analysis, we propose to use LEConv (Eq. \ref{eq:leconv}) for scoring clusters. LEConv can learn to score clusters by considering both its global and local importance through the use of self-loops and ability to learn functions of local extremas.

\subsection{Graph Connectivity}
Here, we analyze ASAP from the aspect of edge connectivity in the pooled graph. When considering $h$-hop neighborhood for clustering, both ASAP and DiffPool have $RF^{edge} = 2h+1$ because they use Eq. \eqref{eq:stas} to define the edge connectivity. On the other hand, both TopK and SAGPool have $RF^{edge} = h$. A larger edge receptive field implies that the pooled graph has better connectivity which is important for the flow of information in the subsequent GCN layers.

\begin{theorem}

\label{thm:star-graph}
	Let the input graph $\mathcal{G}$ be a tree of any possible structure with $N$ nodes. Let $k^{*}$ be the lower bound on sampling ratio $k$ to ensure the existence of atleast one edge in the pooled graph irrespective of the structure of $\mathcal{G}$ and the location of the selected nodes. For TopK or SAGPool, $k^{*} \rightarrow 1$ whereas for ASAP, $k^{*} \rightarrow 0.5$ as $N \rightarrow \infty$.
\begin{proof}
See Appendix Sec. \ref{ssec:graph-connect-proof} for proof.
\end{proof}
\end{theorem}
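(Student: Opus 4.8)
The plan is to reduce, for each pooling method, the statement ``$G^{p}$ contains an edge'' to a purely combinatorial condition on the set $I$ of selected medoids (nodes), and then to identify the extremal tree and extremal choice of $I$. For TopK and SAGPool the pooled adjacency is just the subgraph induced on $I$, so $G^{p}$ is edgeless iff $I$ is an independent set of $\mathcal{G}$. For ASAP I would start from $A^{p}=\hat{S}^{T}\hat{A}^{c}\hat{S}$ with $\hat{A}^{c}=A+I$, and use that $S_{k,i}=\alpha_{i,k}$ is \emph{strictly} positive exactly when $v_{k}\in c_{h}(v_{i})$ (the $\mathrm{softmax}$ in Eq.~\eqref{eq:m2t-add} never vanishes on the cluster). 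Then $A^{p}_{i,j}=\sum_{k,l}\hat{S}_{k,i}\hat{A}^{c}_{k,l}\hat{S}_{l,j}>0$ iff there are $k\in c_{h}(v_{i})$, $l\in c_{h}(v_{j})$ with $k=l$ or $v_{k}\sim v_{l}$ in $\mathcal{G}$; by the triangle inequality this happens iff $\mathrm{dist}_{\mathcal{G}}(v_{i},v_{j})\le 2h+1$, which is $3$ for ASAP's $1$-hop clusters (one direction: take $k,l$ on a shortest path; the other: $\mathrm{dist}(v_{i},v_{j})\le \mathrm{dist}(v_{i},v_{k})+1+\mathrm{dist}(v_{l},v_{j})$). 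Hence $G^{p}$ is edgeless iff the selected medoids are \emph{pairwise at distance $\ge 4$} in $\mathcal{G}$. This recovers $RF^{edge}=2h+1$ and is the analogue of ``$I$ independent'' for TopK.

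For TopK/SAGPool the extremal analysis is short. Any tree on $N\ge 2$ nodes has an edge $(u,v)$, and every independent set omits $u$ or $v$, so a tree's maximum independent set has size at most $N-1$; the star $K_{1,N-1}$ attains it with its $N-1$ leaves. Thus an adversary (choosing both the tree and the $\lceil kN\rceil$ selected nodes) can keep $G^{p}$ edgeless exactly while $\lceil kN\rceil\le N-1$, so $k^{*}$ is the least $k$ with $\lceil kN\rceil=N$, namely $k^{*}=(N-1)/N\to 1$.

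The ASAP bound is the substantive part, and I would do it in two halves. \emph{Upper bound on the packing:} if $I$ is a set of tree vertices pairwise at distance $\ge 4$, assign each $v_{i}\in I$ an arbitrary $\mathcal{G}$-neighbour $n(v_{i})$ (every vertex has one, since the tree is connected with $N\ge 2$); two medoids at distance $\ge 3$ cannot share a neighbour, and since $I$ is independent no such $n(v_{i})$ lies in $I$, so $v_{i}\mapsto n(v_{i})$ injects $I$ into $\mathcal{V}\setminus I$ and $|I|\le\lfloor N/2\rfloor$. Hence $\lceil kN\rceil\ge\lfloor N/2\rfloor+1$ already forces an edge in $G^{p}$ for \emph{every} tree. \emph{Matching lower bound:} take the subdivided star — a centre $c$ joined to $m$ length-$2$ legs $c-a_{i}-b_{i}$, so $N=2m+1$ — whose tips $\{b_{1},\dots,b_{m}\}$ are pairwise at distance $4$; selecting them leaves $G^{p}$ edgeless while $\lceil kN\rceil\le m=(N-1)/2$ (for even $N$, append a pendant vertex, giving a packing within one of $N/2$). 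Combining the two halves, $k^{*}$ is the least $k$ with $\lceil kN\rceil\ge\lfloor N/2\rfloor+1$, so $k^{*}=\tfrac{N-1}{2N}+o(1)\to 0.5$.

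The step I expect to be the main obstacle is the ASAP edge characterization: showing cleanly that the triple product $\hat{S}^{T}\hat{A}^{c}\hat{S}$ has a nonzero \emph{off-diagonal} entry at $(i,j)$ exactly when the two medoids are within $2h+1$ hops — in particular that strict positivity of the cluster-membership (attention) weights rules out accidental zeros — and then pinning down that the \emph{subdivided star}, rather than a path (which only yields ratio $\tfrac14$) or a deeper spider (ratio $\tfrac1{h+1}$), is what realizes the $\tfrac12$ packing ratio for the $1$-hop case; once the ``short legs off a hub'' construction is in hand, the counting half via the injective-neighbour map is immediate. The TopK/SAGPool half, by contrast, is routine given the independent-set reduction.
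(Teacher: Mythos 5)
Your proof is correct, and it reaches the same quantitative conclusion as the paper ($k^*\to 1$ for TopK/SAGPool, $k^*\to 0.5$ for ASAP) by a genuinely different route for the combinatorial core. Both arguments reduce the theorem to bounding the largest set of tree vertices that are pairwise more than $RF^{edge}$ hops apart, but the paper gets its bound $\bigl\lfloor (N-1)/(h/2)\bigr\rfloor$ through a chain of structural lemmas — defining ``optimum-trees,'' showing the optimum packing number changes by at most one when a vertex is removed, arguing the optimal tree on $N-1$ vertices is an induced subgraph of the one on $N$ vertices, and then inducting on $N$ to conclude the balanced starlike tree is extremal for general even hop-distance — whereas you prove the upper bound $|I|\le\lfloor N/2\rfloor$ in one stroke via the injective map $v_i\mapsto n(v_i)$ into $\mathcal{V}\setminus I$ (which only needs pairwise distance $\ge 3$) and then exhibit the subdivided star as a matching construction. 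Your bound is weaker by at most one than the paper's exact $\lfloor (N-1)/2\rfloor$ (e.g.\ for even $N$), but that is irrelevant to the limit, and in exchange your argument is shorter, fully rigorous, and avoids the parts of the paper's induction (notably its Lemma 2 and the unproved Lemma 3) that are only sketched. You also add something the paper leaves implicit: a derivation of the edge criterion ``$A^p_{i,j}>0$ iff $\mathrm{dist}(v_i,v_j)\le 2h+1$'' directly from $A^p=\hat{S}^{T}\hat{A}^{c}\hat{S}$ and the strict positivity of the softmax memberships, rather than importing $RF^{edge}=2h+1$ as a given. Two cosmetic points: your final threshold should be stated as sandwiched between the construction value and the packing bound (they differ by one), which still yields $k^*=\tfrac12+o(1)$; and note your method, unlike the paper's, does not by itself identify the extremal tree or the exact packing number for general $h$ — it is tailored to the $h=1$ case the theorem actually needs.
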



\noindent Theorem \ref{thm:star-graph} suggests that ASAP can achieve a similar degree of connectivity as SAGPool or TopK for a much smaller sampling ratio $k$. For a tree with no prior information about its structure, ASAP would need to sample only half of the clusters whereas TopK and SAGPool would need to sample almost all the nodes, making TopK and SAGPool inefficient for such graphs. In general, independent of any combination of nodes selected, ASAP will have better connectivity due to its larger receptive field. Please refer to Appendix Sec. \ref{ssec:graph-connect-proof} for a similar analysis on path graph and more details.

\subsection{Graph Permutation Equivariance}

\begin{proposition}
ASAP is a graph permutation equivariant pooling operator. 
\end{proposition}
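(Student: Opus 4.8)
The plan is to decompose ASAP into its constituent stages --- feature extraction and master-query computation by GCN, cluster assignment by the M2T attention, cluster scoring by LEConv, $\mathrm{TOP}_k$ selection, and adjacency re-computation via Eq.~\eqref{eq:stas} --- and to show that each stage commutes with a relabelling of the input nodes; equivariance of ASAP then follows by composition. Fix a graph with $N$ nodes, features $X\in\mathbb{R}^{N\times d}$ and adjacency $A$, and let $P\in\{0,1\}^{N\times N}$ be an arbitrary permutation matrix inducing the bijection $\pi$ on $\{1,\dots,N\}$, so that the relabelled graph has features $PX$ and adjacency $PAP^{\top}$. I will use the following notion: a pooling operator $(X,A)\mapsto(X^{p},A^{p})$ with $X^{p}\in\mathbb{R}^{\lceil kN\rceil\times d}$ is permutation equivariant if for every $P$ there is a permutation matrix $Q\in\{0,1\}^{\lceil kN\rceil\times\lceil kN\rceil}$, depending on $P$ and on the set of clusters selected, such that applying ASAP to $(PX,PAP^{\top})$ returns $(QX^{p},QA^{p}Q^{\top})$. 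The goal is to exhibit this $Q$.

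First I would record two elementary facts. (i) Every GCN-style layer is equivariant: since $\widehat{PAP^{\top}}=P\hat{A}P^{\top}$, the degree matrix transforms as $\hat{D}\mapsto P\hat{D}P^{\top}$, and $P^{\top}P=I$, the symmetric normalisation conjugates by $P$, giving $\mathrm{GCN}(PX,PAP^{\top})=P\,\mathrm{GCN}(X,A)$. The same computation applies verbatim to LEConv (Eq.~\eqref{eq:leconv}): its self-loop term $x_i^{c}W_1$, its neighbour-difference terms $x_i^{c}W_2-x_j^{c}W_3$, and the sum over $\mathcal{N}(i)$ are all covariant under $\pi$, so $\Phi(PX,PAP^{\top})=P\,\Phi(X,A)$ and hence $\hat{X}^{c}\mapsto P\hat{X}^{c}$. (ii) All set-indexed computations in the cluster-formation step are insensitive to the order in which neighbours are listed: the $h$-hop neighbourhood of node $\pi(i)$ in the relabelled graph is the $\pi$-image of $\mathcal{N}_h(v_i)$; the $\max$ master query is a function of the \emph{set} $\{x_j' : v_j\in c_h(v_i)\}$; and the softmax in Eq.~\eqref{eq:m2t-add} sums over that same set. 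Applying fact (i) to the GCN that produces $x_j'$ and combining with (ii) yields $\alpha^{P}_{\pi(i),\pi(j)}=\alpha_{i,j}$, i.e. the cluster assignment matrix satisfies $S(PX,PAP^{\top})=PSP^{\top}$, and Eq.~\eqref{eq:cluster_repr} then gives $X^{c}\mapsto PX^{c}$.

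Next I would treat the selection step, which is the only place where the output dimension changes and which I expect to be the sole delicate point. Because $\hat{X}^{c}\mapsto P\hat{X}^{c}$, the multiset of fitness scores is unchanged, and --- assuming, as in TopK and SAGPool, that $\mathrm{TOP}_k$ resolves ties by a fixed deterministic rule --- the set of clusters selected from the relabelled graph is exactly $\pi(\hat{i})$, where $\hat{i}$ is the set selected from $G$. Writing selection of a column subset as right-multiplication by an indicator matrix, there is a unique permutation matrix $Q$ on the $\lceil kN\rceil$ selected clusters with $\hat{S}(PX,PAP^{\top})=P\hat{S}Q^{\top}$; this $Q$ is the claimed output relabelling, and Eq.~\eqref{eq:x-pool} gives $X^{p}\mapsto QX^{p}$. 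Finally, from Eq.~\eqref{eq:stas} and $P^{\top}P=I$,
\begin{equation}
\nonumber
A^{p}(PX,PAP^{\top})=(P\hat{S}Q^{\top})^{\top}(P\hat{A}^{c}P^{\top})(P\hat{S}Q^{\top})=Q\,\hat{S}^{\top}\hat{A}^{c}\hat{S}\,Q^{\top}=QA^{p}Q^{\top}.
\end{equation}
Chaining the three stages shows ASAP itself is equivariant with this same $Q$. The main obstacle is therefore not any of the covariance computations --- routine once facts (i) and (ii) are in place --- but making the $\mathrm{TOP}_k$ map single-valued under permutations (handled by the deterministic tie-break) and consistently tracking the induced permutation $Q$ across $\hat{S}$, $X^{p}$ and $A^{p}$; I would state these bookkeeping steps explicitly rather than suppress them.
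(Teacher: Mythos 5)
Your proposal is correct and follows essentially the same route as the paper's proof: establish $S \rightarrow PSP^{T}$ from the permutation-covariance of the attention/convolution components, track the induced permutation on the selected clusters through the $\mathrm{TOP}_k$ step (your $Q$ is the paper's $P[i,i]$), and propagate it through $X^{p}=\hat{X}^{c}(\hat{i},:)$ and $A^{p}=\hat{S}^{T}\hat{A}^{c}\hat{S}$. Your version is more careful on two points the paper leaves implicit --- the explicit equivariance check for GCN/LEConv and the need for a deterministic tie-breaking rule in $\mathrm{TOP}_k$ --- but these are refinements of the same argument rather than a different one.
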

\begin{proof}
See Appendix Sec. \ref{ssec:perm-eq-proof} for proof.
\end{proof}

\section{Experimental Setup}
In our experiments, we use $5$ graph classification benchmarks and compare ASAP with multiple pooling methods. Below, we describe the statistics of the dataset, the baselines used for comparisons and our evaluation setup in detail.

\subsection{Datasets}
We demonstrate the effectiveness of our approach on $5$ graph classification datasets. D\&D \cite{dd1,dd2-proteins} and PROTEINS \cite{dd2-proteins,proteins} are datasets containing proteins as graphs. NCI1 \cite{nci1} and NCI109 are datasets for anticancer activity classification. FRANKENSTEIN \cite{frankenstein} contains molecules as graph for mutagen classification. Please refer to Table \ref{tab:stats} for the dataset statistics.

\begin{table}[!tbh]
	\centering
	\resizebox{1\columnwidth}{!}{
		\begin{tabular}{lcccc}
			\toprule
			\textbf{Dataset} & $\text{G}_{avg}$ & $\text{C}_{avg}$ & $\text{V}_{avg}$ & $\text{E}_{avg}$ \\
			\midrule
			D\&D    & 1178 & 2 & 284.32 & 715.66 \\
			PROTEINS    & 1113 & 2 & 39.06 & 72.82 \\
			NCI1    & 4110 & 2 & 29.87 & 32.30 \\
			NCI109    & 4127 & 2 & 29.68 & 32.13 \\
			FRANKENSTEIN    & 4337 & 2 & 16.90 & 17.88 \\
			\bottomrule
		\end{tabular}
	}
	\caption{\label{tab:stats} Statistics of the graph datasets. $\text{G}_{avg}$, $\text{C}_{avg}$, $\text{V}_{avg}$ and $\text{E}_{avg}$ denotes the average number of graphs, classes, nodes and edges respectively.}
\end{table}

\subsection{Baselines}
We compare ASAP with previous state-of-the-art hierarchical pooling operators DiffPool \cite{diffpool}, TopK \cite{topk} and SAGPool \cite{sag}. For comparison with global pooling, we choose Set2Set \cite{set2set}, Global-Attention \cite{glob-att} and SortPool \cite{sortpool}.

\subsection{Training \& Evaluation Setup}
We use a similar architecture as defined in \cite{topk2,sag} which is depicted in Fig. \ref{fig:asap}(f). For ASAP, we choose $k = 0.5$ and $h = 1$ to be consistent with baselines.\footnote{Please refer to Appendix Sec. \ref{ssec:hyper-tune} for further details on hyperparameter tuning and Appendix Sec. \ref{ssec:ablation-k} for ablation on $k$.} Following SAGPool\cite{sag}, we conduct our experiments using $10$-fold cross-validation and report the average accuracy on $20$ random seeds.\footnote{Source code for ASAP can be found at: \url{https://github.com/malllabiisc/ASAP}}

\begin{table}[!tbh]\
	\small
	\centering
	\begin{tabular}{lcc}
		\toprule
		Aggregation type & FITNESS & CLUSTER \\
		\midrule
		None 		& - 		& - 		\\
		Only cluster 		& - 		& \cmark 	\\
		Both 		& \cmark 	& \cmark 	\\
		\bottomrule
	\end{tabular}
	\caption{\label{tab:aggr_types} Different aggregation types as mentioned in Sec \ref{sec:ablation_aggr}.}
\end{table}

\section{Results}
In this section, we attempt to answer the following questions:
\begin{description}
	\item[Q1] How does ASAP perform compared to other pooling methods at the task of graph classification? (Sec. \ref{sec:results})
	\item[Q2] Is cluster formation by M2T attention based node aggregation beneficial during pooling? (Sec. \ref{sec:ablation_attn})
	\item[Q3] Is LEConv better suited as cluster fitness scoring function compared to vanilla GCN? (Sec. \ref{sec:ablation_fitness})
	\item[Q4] How helpful is the computation of inter-cluster soft edge weights instead of sampling edges from the input graph? (Sec. \ref{sec:ablation_edge})
\end{description}

\subsection{Performance Comparison}
\label{sec:results}
We compare the performace of ASAP with baseline methods on $5$ graph classification tasks. The results are shown in Table \ref{tab:comparison}. All the numbers for hierarchical pooling (DiffPool, TopK and SAGPool) are taken from \cite{sag}. For global pooling (Set2Set, Global-Attention and SortPool), we modify the architectural setup to make them comparable with the hierarchical variants. \footnote{Please refer to Appendix Sec. \ref{ssec:global-pool} for more details}. We observe that ASAP consistently outperforms all the baselines on all $5$ datasets. We note that ASAP has an average improvement of $4\%$ and $3.5\%$ over previous state-of-the-art hierarchical (SAGPool) and global (SortPool) pooling methods respectively. We also observe that compared to other hierarchical methods, ASAP has a smaller variance in performance which suggests that the training of ASAP is more stable.

\subsection{Effect of Node Aggregation}
\label{sec:ablation_aggr}
Here, we evaluate the improvement in performance due to our proposed technique of aggregating nodes to form a cluster. There are two aspects involved during the creation of clusters for a pooled graph:

\begin{itemize}
	\item FITNESS: calculating fitness scores for individual nodes. Scores can be calculated either by using only the medoid or by aggregating neighborhood information.
	\item CLUSTER: generating a representation for the new cluster node. Cluster representation can either be the medoid's representation or some feature aggregation of the neighborhood around the medoid.
\end{itemize}

\noindent We test three types of aggregation methods: 'None', 'Only cluster' and 'Both' as described in Table \ref{tab:aggr_types}. As shown in Table \ref{tab:aggregation}, we observe that our proposed node aggregation helps improve the performance of ASAP.

\begin{table}[!tbh]\
	\centering
	\begin{tabular}{lcc}
		\toprule
		Aggregation & \textsc{FRANKENSTEIN} & \textsc{NCI1} \\
		\midrule
		None & 67.4 $\pm$0.6 & 69.9 $\pm$ 2.5\\
		Only cluster & 67.5 $\pm$0.5 & 70.6 $\pm$ 1.8\\
		Both & $\mathbf{67.8 \pm 0.6}$ & $\mathbf{70.7 \pm 2.3}$ \\
		\bottomrule
	\end{tabular}
	\caption{\label{tab:aggregation} Performace comparison of different aggregation methods on validation data of FRANKENSTEIN and NCI1.}
\end{table}

\begin{table}[!tbh]\
	\centering
	\begin{tabular}{lcc}
		\toprule
		Attention & \textsc{FRANKENSTEIN} & \textsc{NCI1} \\
		\midrule
		T2T & 67.6 $\pm$ 0.5 & 70.3 $\pm$ 2.0 \\
		S2T & 67.7 $\pm$ 0.5 & 69.9 $\pm$ 2.0 \\
		M2T & $\mathbf{67.8 \pm 0.6}$ & $\mathbf{70.7 \pm 2.3}$ \\
		\bottomrule
	\end{tabular}
	\caption{\label{tab:attention} Effect of different attention framework on pooling evaluated on validation data of FRANKENSTEIN and NCI1. Please refer to Sec. \ref{sec:ablation_attn} for more details.}
\end{table}

\subsection{Effect of M2T Attention}
\label{sec:ablation_attn}
We compare our M2T attention framework with previously proposed S2T and T2T attention techniques. The results are shown in Table \ref{tab:attention}. We find that M2T attention is indeed better than the rest in NCI1 and comparable in FRANKENSTEIN.

\begin{table}[!ht]\
	\centering
	\begin{tabular}{lcc}
		\toprule
		Fitness function & \textsc{FRANKENSTEIN} & \textsc{NCI1} \\
		\midrule
		GCN & 62.7$\pm$0.3 & 65.4$\pm$2.5 \\
		Basic-LEConv & 63.1$\pm$0.7 & 69.8$\pm$1.9 \\
		LEConv & \textbf{67.8$\pm$0.6} & \textbf{70.7$\pm$2.3} \\
		\bottomrule
	\end{tabular}
	\caption{\label{tab:leconv} Performance comparison of different fitness scoring functions on validation data of FRANKENSTEIN and NCI1. Refer to Sec. \ref{sec:ablation_fitness} for details.}
\end{table}

\subsection{Effect of LEConv as a fitness scoring function}
\label{sec:ablation_fitness}
In this section, we analyze the impact of LEConv as a fitness scoring function in ASAP. We use two baselines - GCN (Eq. \ref{eq:gcn}) and Basic-LEConv which computes $\phi_{i} = \sigma(x_{i}W + \sum_{j \in \mathcal{N}(x_{i})} A_{i, j} (x_{i}W-x_{j}W))$. In Table \ref{tab:leconv} we can see that Basic-LEConv and LEConv perform significantly better than GCN because of their ability to model functions of local extremas. Further, we observe that LEConv performs better than Basic-LEConv as it has three different linear transformation compared to only one in the latter. This allows LEConv to potentially learn complicated scoring functions which is better suited for the final task. Hence, our analysis in Theorem \ref{thm:gcn-score} is emperically validated.

\subsection{Effect of computing Soft edge weights}
\label{sec:ablation_edge}
We evaluate the importance of calculating edge weights for the pooled graph as defined in Eq. \ref{eq:stas}. We use the best model configuration as found from above ablation analysis and then add the feature of computing soft edge weights for clusters. We observe a significant drop in performace when the edge weights are not computed. This proves the necessity of capturing the edge information while pooling graphs.

\begin{table}[!tbh]\
	\centering
	\begin{tabular}{ccc}
		\toprule
		Soft edge weights & \textsc{FRANKENSTEIN} & \textsc{NCI1} \\
		\midrule
		Absent & 67.8 $\pm$ 0.6 & 70.7 $\pm$ 2.3 \\
		Present & $\mathbf{68.3 \pm 0.5}$ & $\mathbf{73.4 \pm 0.4}$\\
		\bottomrule
	\end{tabular}
	\caption{\label{tab:stas} Effect of calculating soft edge weights on pooling for validation data of FRANKENSTEIN and NCI1. Please refer to Sec. \ref{sec:ablation_edge} for more details.}
\end{table}

\section{Discussion}

\subsection{Comparison with other pooling methods}
\label{sec:discussion_comp}
\subsubsection{DiffPool}
DiffPool and ASAP both aggregate nodes to form a cluster. While ASAP only considers nodes which are within $h$-hop neighborhood from a node $x_{i}$ (medoid) as a cluster, DiffPool considers the entire graph. As a result, in DiffPool, two nodes that are disconnected or far away in the graph can be assigned similar clusters if the nodes and their neighbors have similar features. Since this type of cluster formation is undesirable for a pooling operator \cite{diffpool}, DiffPool utilizes an auxiliary link prediction objective during training to specifically prevent far away nodes from being clustered together. ASAP needs no such additional regularization because it ensures the localness while clustering. DiffPool's soft cluster assignment matrix $S$ is calculated for all the nodes to all the clusters making $S$ a dense matrix. Calculating and storing this does not scale easily for large graphs. ASAP, due to the local clustering over $h$-hop neighborhood, generates a sparse assignment matrix while retaining the hierarchical clustering properties of Diffpool. Further, for each pooling layer, DiffPool has to predetermine the number of clusters it needs to pick which is fixed irrespective of the input graph size. Since ASAP selects the top $k$ fraction of nodes in current graph, it inherently takes the size of the input graph into consideration.

\subsubsection{TopK \& SAGPool}
While TopK completely ignores the graph structure during pooling, SAGPool modifies the TopK formulation by incorporating the graph structure through the use of a GCN network for computing node scores $\phi$. To enforce sparsity, both TopK and SAGPool avoid computing the cluster assignment matrix $S$ that DiffPool proposed. Instead of grouping multiple nodes to form a cluster in the pooled graph, they \textit{drop} nodes from the original graph   based on a score \cite{topk2} which might potentially lead to loss of node and edge information. Thus, they fail to leverage the overall graph structure while creating the clusters. In contrast to TopK and SAGPool, ASAP can capture the rich graph structure while aggregating nodes to form clusters in the pooled graph. TopK and SAGPool sample edges from the original graph to define the edge connectivity in the pooled graph. Therefore, they need to sample nodes from a local neighborhood to avoid isolated nodes in the pooled graph. Maintaining graph connectivity prevents these pooling operations from sampling representative nodes from the entire graph. The pooled graph in ASAP has a better edge connectivity compared to TopK and SAGPool because soft edge weights are computed between clusters using upto three hop connections in the original graph. Also, the use of LEConv instead of GCN for finding fitness values $\phi$ further allows ASAP to sample representative clusters from local neighborhoods over the entire graph.

\subsection{Comparison of Self-Attention variants}
\label{ssec:m2t_compare}
\subsubsection{Source2Token \& Token2Token}
T2T models the membership of a node by generating a query based only on the medoid of the cluster. Graph Attention Network (GAT) \cite{gat} is an example of T2T attention in graphs. S2T finds the importance of each node for a global task. As shown in Eq. \ref{eq:s2t-add}, since a query vector is not used for calculating the attention scores, S2T inherently assigns the same membership score to a node for all the possible clusters that node can belong to. Hence, both S2T and T2T mechanisms fail to effectively utilize the intra-cluster information while calculating a node's cluster membership. On the other hand, M2T uses a master function $f_{m}$ to generate a query vector which depends on all the entities within the cluster and hence is a more representative formulation. To understand this, consider the following scenario. If in a given cluster, a non-medoid node is removed, then the un-normalized membership scores for the rest of the nodes will remain unaffected in S2T and T2T framework whereas the change will reflect in the scores calculated using M2T mechanism. Also, from Table \ref{tab:attention}, we find that M2T performs better than S2T and T2T attention showing that M2T is better suited for global tasks like pooling.


\section{Conclusion}
In this paper, we introduce ASAP, a sparse and differentiable pooling method for graph structured data. ASAP clusters local subgraphs hierarchically which helps it to effectively learn the rich information present in the graph structure. We propose Master2Token self-attention framework which enables our model to better capture the membership of each node in a cluster. We also propose LEConv, a novel GNN formulation that scores the clusters based on its local and global importance. ASAP leverages LEConv to compute cluster fitness scores and samples the clusters based on it. This ensures the selection of representative clusters throughout the graph. ASAP also calculates sparse edge weights for the selected clusters and is able to capture the edge connectivity information efficiently while being scalable to large graphs. We validate the effectiveness of the components of ASAP both theoretically and empirically. Through extensive experiments, we demonstrate that ASAP achieves state-of-the-art performace on multiple graph classification datasets.



\section{Acknowledgements}
We would like to thank the developers of Pytorch\_Geometric \cite{bhai} which allows quick implementation of geometric deep learning models. We would like to thank Matthias Fey again for actively maintaining the library and quickly responding to our queries on github.

\bibliographystyle{aaai}
{\fontsize{9.0pt}{10.0pt} \selectfont \bibliography{ms}}


\appendix

\setcounter{secnumdepth}{1}
	
	\vspace{1cm}
	{\hspace{2.7cm} \textbf{\LARGE Appendix}}

	\section{Hyperparameter Tuning}
	\label{ssec:hyper-tune}
	For all our experiments, Adam \cite{adam} optimizer is used. $10$-fold cross-validation is used with $80\%$ for training and $10\%$ for validation and test each. Models were trained for $100$ epochs with lr decay of $0.5$ after every $50$ epochs. The range of hyperparameter search are provided in Table \ref{tab:hyper-tune}. The model with best validation accuracy was selected for testing. Our code is based on Pytorch Geometric library \cite{bhai}.

	\begin{table}[tbh!]
		\begin{center}
			\begin{small}
				\begin{tabular}{lcl}
					\toprule
					\textbf{Hyperparameter} & & \textbf{Range} \\
					\cmidrule{1-3}
					Hidden dimension & & $\{16, 32, 64, 128\}$ \\
					Learning rate & & $\{0.01, 0.001\}$ \\
					Dropout & & $\{0, 0.1, 0.2, 0.3, 0.4, 0.5\}$\\
					L2 regularization & & $5e^{-4}$ \\
					\bottomrule
				\end{tabular}
				\caption{\label{tab:hyper-tune} Hyperparameter tuning Summary.}
			\end{small}
		\end{center}
	\end{table}

	\section{Details of Hierarchical Pooling Setup}
	\label{ssec:global-pool}
	For hierarchical pooling, we follow SAGPool \cite{sag} and use three layers of GCN, each followed by a pooling layer. After each pooling step, the graph is summarized using a readout function which is a concatenation of the $mean$ and $max$ of the node representations (similar to SAGPool). The summaries are then added and passed through a network of fully-connected layers separated by dropout layers to predict the class.

	\section{Details of Global Pooling Setup}
	Global Pooling architecture is same as the hierarchical architecture with the only difference that pooling is done only after all GCN layers. We do not use readout function for global pooling as they do not require them. To be comparable with other models, we restrict the feature dimension of the pooling output to be no more than $256$. For global pooling layers, range for hidden dimension and lr search was same as ASAP.
	
	\begin{table}[tbh!]
		\begin{center}
			\begin{small}
				\resizebox{\columnwidth}{!}{
					\begin{tabular}{lcl}
						\toprule
						\textbf{Method} & & \textbf{Range} \\
						\cmidrule{1-3}
						Set2Set & & processing-step $\in \{5, 10\}$ \\
						Global-Attention & & transform $\in \{True, False\}$ \\
						SortPool & & $K$ is chosen such that output of pooling $\leq 256$\\
						\bottomrule
					\end{tabular}
				}
				\caption{\label{tab:statistics}Global Pooling Hyperparameter Tuning Summary.}
			\end{small}
		\end{center}
		\vskip -0.1in
	\end{table}

	
	\section{Similarities between pooling in CNN and ASAP}
	\label{ssec:cnn-asap}
	In CNN, pooling methods (e.g mean pool and max pool) have two hyperparameter: kernel size and stride. Kernel size decides the number of pixels being considered for computing each new pixel value in the next layer. Stride decides the fraction of new pixels being sampled thereby controlling the size of the image in next layer. In ASAP, $RF^{node}$ determines the neighborhood radius of clusters and $k$ decides the sampling ratio. This makes $RF^{node}$ and $k$ are analogous to kernel size and stride of CNN pooling respectively. There are however some key differences. In CNN, a given kernel size corresponds to a fixed number of pixels around a central pixel whereas in ASAP, the number of nodes being considered is variable, although the neighborhood $RF^{node}$ is constant. In CNN, stride uniformly samples from new pixels whereas in ASAP, the model has the flexibility to attend to different parts of the graph and sample accordingly.

	\section{Ablation on pooling ratio $k$}
	\label{ssec:ablation-k}
	
	\begin{figure}[!ht]
		\centering
		\includegraphics[width=0.4\textwidth]{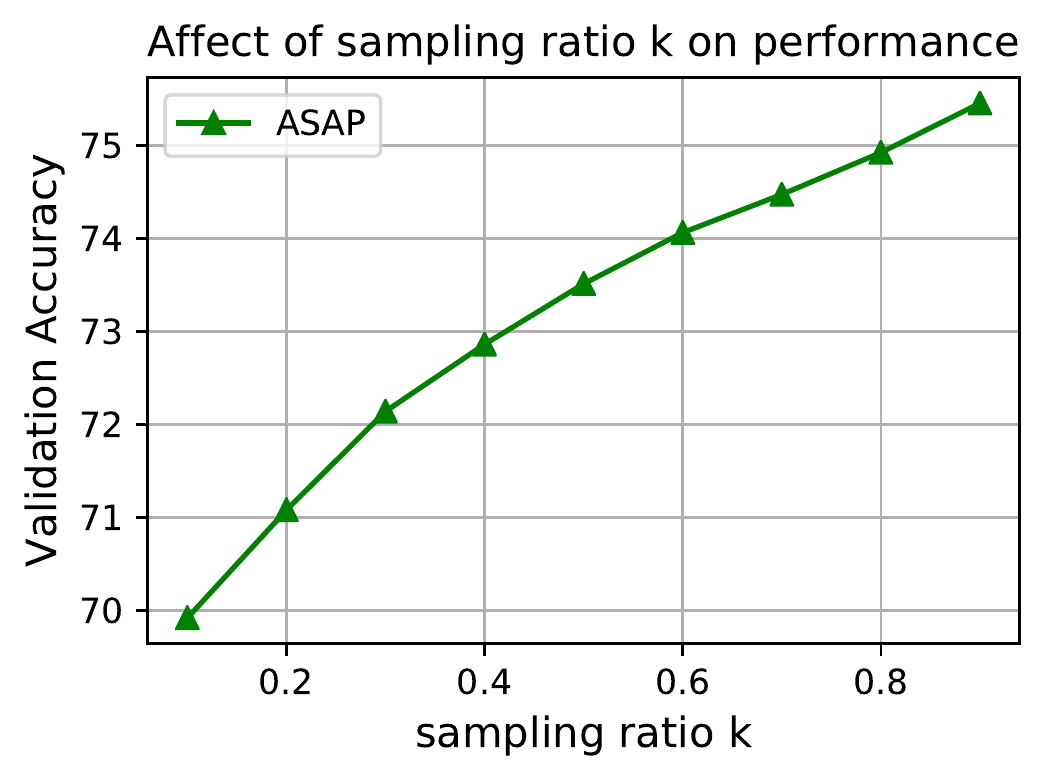}
		\caption{\label{fig:sampling-ratio-k} Validation Accuracy vs sampling ratio $k$ on NCI1 dataset.}
	\end{figure}
	
	Intuitively, higher $k$ will lead to more information retention. Hence, we expect an increase in performance with increasing $k$. This is empirically observed in Fig. \ref{fig:sampling-ratio-k}. However, as $k$ increases, the computational resources required by the model also increase because a relatively larger pooled graph gets propagated to the later layers. Hence, there is a trade-off between performance and computational requirement while deciding on the pooling ratio $k$.

	\section{Proof of Theorem 1}
	\label{ssec:gconv-proof}
	\textbf{Theorem 1.}
	\textit{
		Let $\mathcal{G}$ be a graph with positive adjacency matrix A i.e., $A_{i, j}\geq0$. Consider any function $f(X, A): \mathbb{R}^{N \times d} \times \mathbb{R}^{N \times N} \rightarrow \mathbb{R}^{N \times 1}$ which depends on difference between a node and its neighbors after a linear transformation $W \in \mathbb{R}^{d \times d}$. For e.g:}
	\begin{equation}
	\nonumber
	f_{i} = \sigma(\alpha_{i}x_{i}W + \sum_{j \in \mathcal{N}(i)} \beta_{i,j} (x_{i}W - x_{j}W))
	\end{equation}
	\textit{
		where $f_{i}, \alpha_{i}, \beta_{i,j} \in \mathbb{R}$ and $ x_{i} \in \mathbb{R}^{d}$.
	}
	\begin{enumerate}[label=\alph*)]
		\item \textit{If fitness value $\phi = GCN(X, A)$ then $\phi$ cannot learn f.}
		\item \textit{If fitness value $\phi = LEConv(X, A)$ then $\phi$ can learn f.}
	\end{enumerate}
	
	\begin{proof}
		For GCN, $\phi_{i} = \sigma(\hspace{1mm} \sum_{j \in \mathcal{N}(x_{i}) \cup \{i\}} A_{i, j} x_{j}W)$ where $W$ is a learnable matrix. Since $A_{i, j} \geq 0$, $\phi_{i}$ cannot have a term of the form $\beta_{i,j} (x_{i}W - x_{j}W)$ which proves the first part of the theorem. We prove the second part by showing that LEConv can learn the following function $f$: 
		\begin{equation}
		\label{eq:to-prove-f}
		f_{i} = \sigma(\alpha_{i}x_{i}W + \sum_{j \in \mathcal{N}(i)} \beta_{i,j} (x_{i}W - x_{j}W))
		\end{equation}
		LEConv formulation is defined as:
		\begin{equation}
		\label{eq:club}
		\phi_{i} = \sigma(x_{i} W_{1} + \sum_{j \in \mathcal{N}(i)} A_{i,j} (x_{i} W_{2} - x_{j} W_{3}))
		\end{equation}
		where $W_{1}$, $W_{2}$ and $W_{3}$ are learnable matrices. For $W_{3} = W_{2} = W_{1}$, $\alpha_{1} = 1$ and $\beta_{i,j} = A_{i,j}$  we find Eq. \eqref{eq:club} is equal to Eq. \eqref{eq:to-prove-f}.
	\end{proof}

	\section{Graph Connectivity}
	\subsection{\Large Proof of Theorem 2}
	\label{ssec:graph-connect-proof}
	
	\begin{definition}
		For a graph $\mathcal{G}$, we define optimum-nodes $n^{*}_{h}(\mathcal{G})$ as the maximum number of nodes that can be selected which are atleast $h$ hops away from each other.
	\end{definition}
	
	\begin{definition}
		For a given number of nodes $N$, we define optimum-tree $\mathcal{T}^{*}_{N}$ as the tree which has maximum optimum-nodes $n^{*}_{h}(\mathcal{T}_{N})$ among all possible trees $\mathcal{T}_{N}$ with $N$ nodes.
	\end{definition}

	\begin{lemma}
		\label{lem:atmost-one}
		Let $\mathcal{T}^{*}_{N}$ be an optimum-tree of $N$ vertices and $\mathcal{T}^{*}_{N-1}$ be an optimum tree with $N-1$ vertices. The optimum-nodes of $\mathcal{T}^{*}_{N}$ and $\mathcal{T}^{*}_{N-1}$ differ by atmost one, i.e., $0 \le n^{*}_{h}(\mathcal{T}^{*}_{N}) - n^{*}_{h}(\mathcal{T}^{*}_{N-1}) \le 1$.
		\begin{proof}
			Consider $\mathcal{T}^{*}_{N}$ which has $N$ nodes.
			We can remove any one of the leaf nodes in $\mathcal{T}^{*}_{N}$ to obtain a tree $\mathcal{T}_{N-1}$ with $N-1$ nodes. 
			If any one of the nodes in $n^{*}_{h}(\mathcal{T}^{*}_{N})$ 
			was removed, then $n^{*}_{h}(\mathcal{T}_{N-1})$ would become 
			$n^{*}_{h}(\mathcal{T}^{*}_{N}) - 1$. If any other node was removed , then being a leaf it does not constitute the shortest path between any of the $n^{*}_{h}(\mathcal{T}_{N-1})$ nodes. This implies that the optimum-nodes for $\mathcal{T}_{N-1}$ is atleast $n^{*}_{h}(\mathcal{T}^{*}_{N})-1$, i.e.,
			\begin{equation}
			\label{eq:n-1-range}
			n^{*}_{h}(\mathcal{T}^{*}_{N})-1 \le n^{*}_{h}(\mathcal{T}_{N-1}) \le n^{*}_{h}(\mathcal{T}^{*}_{N})
			\end{equation}
			Since $\mathcal{T}^{*}_{N-1}$ is the optimal-tree, we know that:
			\begin{equation}
			\label{eq:optimum}
			n^{*}_{h}(\mathcal{T}_{N-1}) \le n^{*}_{h}(\mathcal{T}^{*}_{N-1})
			\end{equation}
			Using Eq. \eqref{eq:n-1-range} and \eqref{eq:optimum}
			we can write:
			\begin{equation}
			\nonumber
			n^{*}_{h}(\mathcal{T}^{*}_{N}) - n^{*}_{h}(\mathcal{T}^{*}_{N-1}) \le 1
			\end{equation}
			which proves our lemma.
		\end{proof}
	\end{lemma}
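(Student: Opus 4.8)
The plan is to prove the two inequalities $0 \le n^{*}_{h}(\mathcal{T}^{*}_{N}) - n^{*}_{h}(\mathcal{T}^{*}_{N-1})$ and $n^{*}_{h}(\mathcal{T}^{*}_{N}) - n^{*}_{h}(\mathcal{T}^{*}_{N-1}) \le 1$ separately, both times reducing between the $N$-vertex and $(N-1)$-vertex cases by adding or deleting a single leaf. The one structural fact the whole argument rests on is that if $v$ is a leaf of a tree $T$, then the unique path in $T$ between any two vertices $a, b \ne v$ does not pass through $v$, so distances among the remaining vertices are unchanged: $d_{T - v}(a,b) = d_{T}(a,b)$. Consequently any set of vertices that is pairwise at graph distance at least $h$ in one of the trees remains pairwise at distance at least $h$ in the other (restricted to the common vertex set). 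Throughout I assume $N \ge 2$ so that every tree involved has a leaf; the degenerate small cases can be checked by hand.

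For the lower bound I would start from the optimum tree $\mathcal{T}^{*}_{N-1}$, attach one new pendant vertex to an arbitrary node of it to form a tree $\mathcal{T}_{N}$ on $N$ vertices, and note that an optimal distance-$\ge h$ packing of $\mathcal{T}^{*}_{N-1}$, of size $n^{*}_{h}(\mathcal{T}^{*}_{N-1})$, is still such a packing in $\mathcal{T}_{N}$. Hence $n^{*}_{h}(\mathcal{T}_{N}) \ge n^{*}_{h}(\mathcal{T}^{*}_{N-1})$, and by maximality of $\mathcal{T}^{*}_{N}$ over all $N$-vertex trees, $n^{*}_{h}(\mathcal{T}^{*}_{N}) \ge n^{*}_{h}(\mathcal{T}_{N}) \ge n^{*}_{h}(\mathcal{T}^{*}_{N-1})$.

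For the upper bound I would instead start from $\mathcal{T}^{*}_{N}$, pick a leaf $v$, and delete it to obtain an $(N-1)$-vertex tree $\mathcal{T}_{N-1}$. Fix an optimal distance-$\ge h$ packing $P$ of $\mathcal{T}^{*}_{N}$ with $|P| = n^{*}_{h}(\mathcal{T}^{*}_{N})$. If $v \notin P$, then $P$ is still a valid packing in $\mathcal{T}_{N-1}$, so $n^{*}_{h}(\mathcal{T}_{N-1}) \ge |P|$; if $v \in P$, then $P \setminus \{v\}$ is a valid packing in $\mathcal{T}_{N-1}$, so $n^{*}_{h}(\mathcal{T}_{N-1}) \ge |P| - 1$. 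Either way $n^{*}_{h}(\mathcal{T}_{N-1}) \ge n^{*}_{h}(\mathcal{T}^{*}_{N}) - 1$, and since $\mathcal{T}^{*}_{N-1}$ is optimal among $(N-1)$-vertex trees, $n^{*}_{h}(\mathcal{T}^{*}_{N-1}) \ge n^{*}_{h}(\mathcal{T}_{N-1}) \ge n^{*}_{h}(\mathcal{T}^{*}_{N}) - 1$. Combining this with the lower bound gives the lemma.

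The step I expect to be the real content — and the one to state carefully rather than wave past — is the distance-preservation claim for leaf deletion/addition, since it is what legitimizes \emph{both} reductions ("grow by a leaf" for the lower bound and "prune a leaf" for the upper bound); everything else is just combining it with the defining maximality of $\mathcal{T}^{*}_{N}$ and $\mathcal{T}^{*}_{N-1}$. A secondary point to be explicit about is that in the "prune a leaf" direction one must use a leaf of $\mathcal{T}^{*}_{N}$ (which exists because $N \ge 2$), not an arbitrary vertex, as deleting a non-leaf could disconnect the tree and invalidate the comparison.
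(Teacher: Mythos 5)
Your proposal is correct and follows essentially the same route as the paper: prune a leaf from $\mathcal{T}^{*}_{N}$, observe that an optimal distance-$\ge h$ packing loses at most one element, and invoke the optimality of $\mathcal{T}^{*}_{N-1}$. You are in fact slightly more careful than the paper --- you explicitly prove the lower bound $n^{*}_{h}(\mathcal{T}^{*}_{N}) \ge n^{*}_{h}(\mathcal{T}^{*}_{N-1})$ by growing $\mathcal{T}^{*}_{N-1}$ by a pendant vertex (the paper asserts but never argues this direction), and you correctly state the packing size after leaf removal as an inequality rather than an equality.
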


	\begin{lemma}
		\label{lem:sub-graph-optimality}
		Let $\mathcal{T}^{*}_{N}$ be an optimum-tree of $N$ vertices and $\mathcal{T}^{*}_{N-1}$ be an optimum-tree of $N-1$ vertices. $\mathcal{T}^{*}_{N-1}$ is an induced subgraph of $\mathcal{T}^{*}_{N}$.
		\begin{proof}
			Let us choose a node to be removed from $\mathcal{T}^{*}_{N}$ and join its neighboring nodes to obtain a tree $\mathcal{T}_{N-1}$ with $N-1$ nodes with an objective of ensuring a maximum $n^{*}_{h}(\mathcal{T}_{N-1})$. To do so, we can only remove a leaf node from $\mathcal{T}^{*}_{N}$. This is because removing non-leaf nodes can reduce the shortest path between multiple pairs of nodes whereas removing leaf-nodes will reduce only the shortest path to nodes from the new leaf at that position. This ensures least reduction in optimum-nodes for $\mathcal{T}_{N-1}$.
			Removing a leaf node implies that $n^{*}_{h}(\mathcal{T}_{N-1})$ cannot be lesser than $n^{*}_{h}(\mathcal{T}^{*}_{N})-1$ as it affects only the paths involving that particular leaf node. Using Lemma \ref{lem:atmost-one}, we see that $\mathcal{T}_{N-1}$ is equivalent to $\mathcal{T}^{*}_{N-1}$, i.e., $\mathcal{T}_{N-1}$ is one of the possible optimal-trees with $N-1$ nodes. Since $\mathcal{T}_{N-1}$ was formed by removing a leaf node from $\mathcal{T}^{*}_{N}$, we find that $\mathcal{T}^{*}_{N-1}$ is indeed an induced subgraph of $\mathcal{T}^{*}_{N}$.

		\end{proof}
	\end{lemma}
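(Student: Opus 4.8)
The plan is to realize $\mathcal{T}^{*}_{N-1}$, up to isomorphism, as $\mathcal{T}^{*}_{N}$ with a single vertex removed; a vertex-deleted subgraph of $\mathcal{T}^{*}_{N}$ is an induced subgraph of $\mathcal{T}^{*}_{N}$, so the whole task collapses to deciding \emph{which} vertex to delete and checking that the resulting $(N-1)$-vertex tree is again an optimum-tree. Concretely, I would aim to produce a leaf $v$ of $\mathcal{T}^{*}_{N}$ with $n^{*}_{h}(\mathcal{T}^{*}_{N}-v)=n^{*}_{h}(\mathcal{T}^{*}_{N-1})$; then $\mathcal{T}^{*}_{N}-v$ attains the $(N-1)$-vertex maximum, hence is a copy of $\mathcal{T}^{*}_{N-1}$ sitting inside $\mathcal{T}^{*}_{N}$ as an induced subgraph.

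The first step is to restrict to leaves. Deleting an internal vertex of a tree either disconnects it (degree $\ge 3$) or, after reconnecting its two neighbours (degree $2$), strictly shortens some pairwise shortest paths; shrinking distances can only destroy $h$-separatedness, so such a move could make $n^{*}_{h}$ drop by more than one. Deleting a leaf $v$, by contrast, leaves the shortest-path metric on the remaining vertices unchanged, since no shortest path in a tree runs through a leaf. Hence an $h$-separated set of $\mathcal{T}^{*}_{N}$ avoiding $v$ stays $h$-separated in $\mathcal{T}^{*}_{N}-v$ and conversely, so
\[
n^{*}_{h}(\mathcal{T}^{*}_{N}-v)=\max\{\,|S|:\ S\ \text{is}\ h\text{-separated in}\ \mathcal{T}^{*}_{N},\ v\notin S\,\}\ \ge\ n^{*}_{h}(\mathcal{T}^{*}_{N})-1,
\]
the inequality by removing $v$ from a maximum $h$-separated set.

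The second step pairs this with Lemma~\ref{lem:atmost-one}, which gives $n^{*}_{h}(\mathcal{T}^{*}_{N})-1\le n^{*}_{h}(\mathcal{T}^{*}_{N-1})\le n^{*}_{h}(\mathcal{T}^{*}_{N})$, together with $n^{*}_{h}(\mathcal{T}^{*}_{N}-v)\le n^{*}_{h}(\mathcal{T}^{*}_{N-1})$ from maximality of $\mathcal{T}^{*}_{N-1}$ over $(N-1)$-vertex trees. If some maximum $h$-separated set of $\mathcal{T}^{*}_{N}$ omits a leaf, pick that leaf as $v$: then $n^{*}_{h}(\mathcal{T}^{*}_{N}-v)=n^{*}_{h}(\mathcal{T}^{*}_{N})$ and the inequalities force it to equal $n^{*}_{h}(\mathcal{T}^{*}_{N-1})$. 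In the complementary case every leaf lies in every maximum $h$-separated set, $n^{*}_{h}(\mathcal{T}^{*}_{N-1})$ drops to $n^{*}_{h}(\mathcal{T}^{*}_{N})-1$, and removing \emph{any} leaf $v$ already gives $n^{*}_{h}(\mathcal{T}^{*}_{N}-v)=n^{*}_{h}(\mathcal{T}^{*}_{N})-1=n^{*}_{h}(\mathcal{T}^{*}_{N-1})$. Either way $\mathcal{T}^{*}_{N}-v$ is an optimum-tree on $N-1$ vertices, which finishes the proof.

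I expect the bookkeeping in the second step — choosing the leaf so that the one-unit slack allowed by Lemma~\ref{lem:atmost-one} is exactly the slack the deletion incurs — to be the main obstacle, rather than any new idea. Since ``optimum-tree'' is only unique up to isomorphism, the lemma is really the claim that $\mathcal{T}^{*}_{N-1}$ \emph{can be taken} as an induced subgraph of $\mathcal{T}^{*}_{N}$, and the ``every leaf essential'' subcase in particular needs care (one must rule out $n^{*}_{h}(\mathcal{T}^{*}_{N-1})=n^{*}_{h}(\mathcal{T}^{*}_{N})$ there, which is immediate in the small regime where all $(N-1)$-vertex trees are already optimum and otherwise uses the monotonicity supplied by Lemma~\ref{lem:atmost-one}). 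The structural core — that leaf deletion preserves the tree metric and so costs at most one optimum-node — is the easy half, and iterating the lemma exhibits $\mathcal{T}^{*}_{2},\dots,\mathcal{T}^{*}_{N}$ as a nested chain of induced subgraphs, the form in which it feeds into Theorem~\ref{thm:star-graph}.
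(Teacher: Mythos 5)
Your overall route is the same as the paper's: delete a leaf, observe that leaf deletion preserves the tree metric on the surviving vertices so $n^{*}_{h}$ drops by at most one, and then combine Lemma~\ref{lem:atmost-one} with the optimality of $\mathcal{T}^{*}_{N-1}$ to force the leaf-deleted tree to be optimum. Your Case~1 (some maximum $h$-separated set omits a leaf) is handled correctly, and is in fact tighter than the paper's own argument, which never makes the case split and simply asserts that Lemma~\ref{lem:atmost-one} yields optimality of the leaf-deleted tree.

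The genuine gap is your Case~2. When every leaf lies in every maximum $h$-separated set you need $n^{*}_{h}(\mathcal{T}^{*}_{N-1})=n^{*}_{h}(\mathcal{T}^{*}_{N})-1$, and you claim this comes from ``the monotonicity supplied by Lemma~\ref{lem:atmost-one}.'' But that lemma only gives $n^{*}_{h}(\mathcal{T}^{*}_{N})-1\le n^{*}_{h}(\mathcal{T}^{*}_{N-1})\le n^{*}_{h}(\mathcal{T}^{*}_{N})$; it points in the wrong direction and cannot exclude $n^{*}_{h}(\mathcal{T}^{*}_{N-1})=n^{*}_{h}(\mathcal{T}^{*}_{N})$. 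Worse, the needed sub-claim is false: take $h=4$ and let $\mathcal{T}^{*}_{8}$ be the spider with legs of lengths $1,3,3$. Its unique maximum $4$-separated set is its set of three leaves (so you are in Case~2), and $n^{*}_{4}=3=\lfloor 7/2\rfloor$, so it is an optimum tree on $8$ vertices; yet the unique optimum tree on $7$ vertices is the spider with legs $2,2,2$, which also has $n^{*}_{4}=3$, while every leaf deletion of the $(1,3,3)$-spider yields $P_{7}$ or the $(1,2,3)$-spider, each with $n^{*}_{4}=2$. So in Case~2 no choice of leaf works, and the lemma itself fails for this choice of $\mathcal{T}^{*}_{N}$; it survives only in the weaker form that \emph{some} optimum $N$-vertex tree contains \emph{some} optimum $(N-1)$-vertex tree (which is all the induction in Lemma~\ref{lem:star} needs, and which follows by instead attaching a leaf to $\mathcal{T}^{*}_{N-1}$ whenever $n^{*}_{h}$ does not increase from $N-1$ to $N$). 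The paper's proof glosses over exactly the same point, so your attempt is no worse --- indeed it is more explicit about where the difficulty sits --- but as written Case~2 does not close.
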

	
	\begin{figure}[t]
		\centering
		\subfloat[Starlike tree]{
			\includegraphics[width=0.22\textwidth]{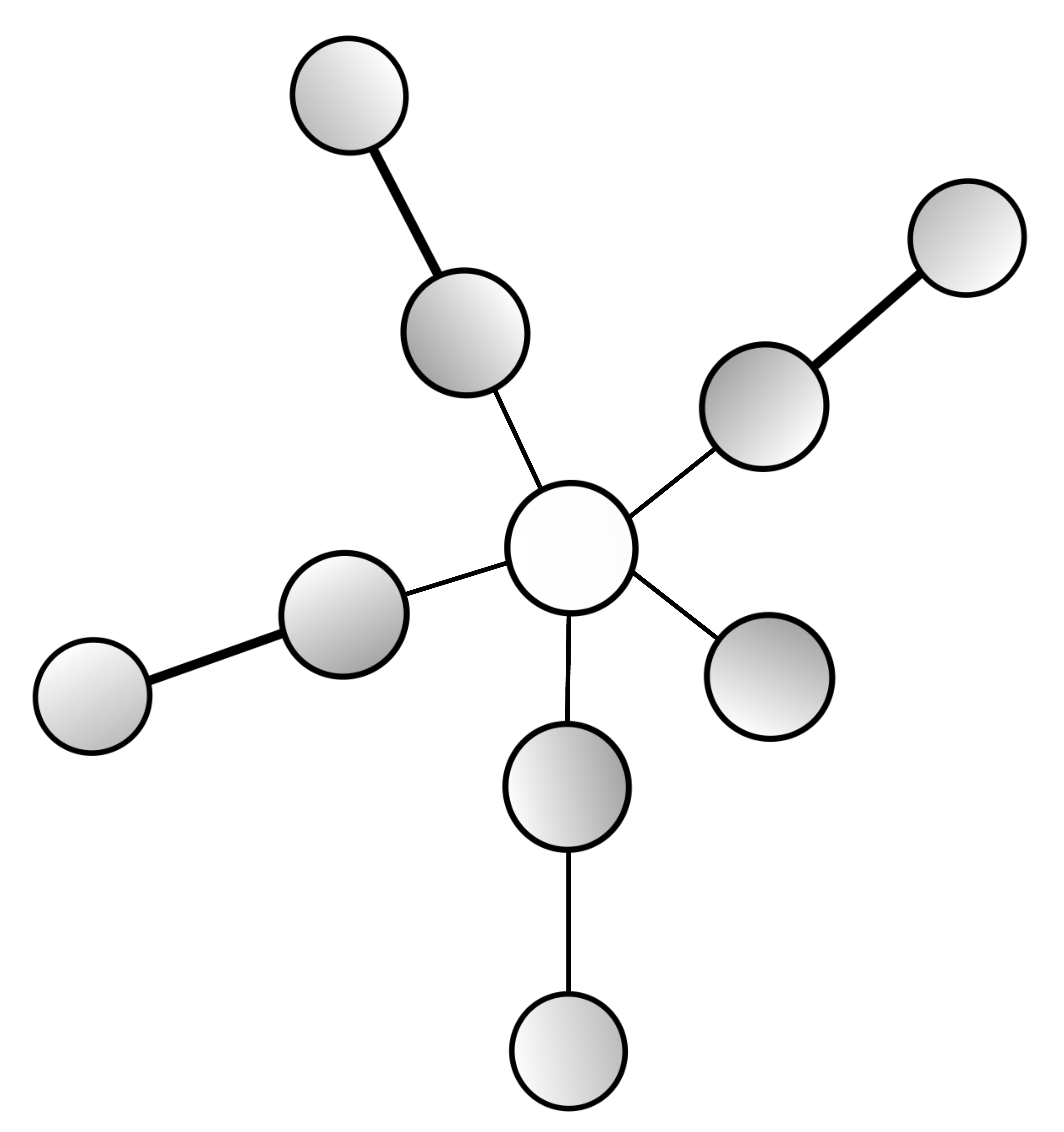}
			\label{starlike}
		}
		\subfloat[Path Graph]{
			\includegraphics[width=0.22\textwidth]{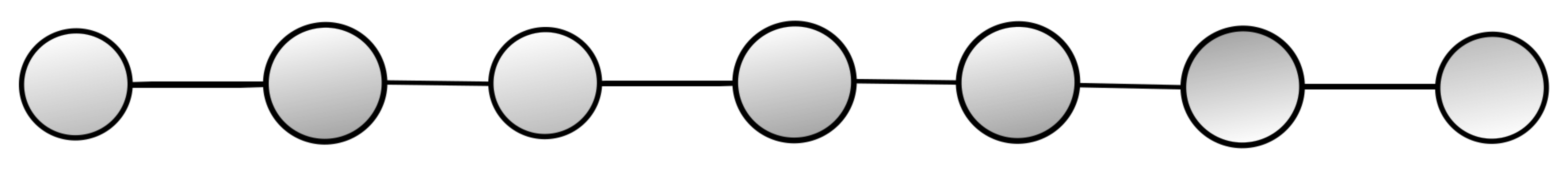}
			\label{path}
		}
		\caption{ (a) Balanced Starlike tree with height 2. (b) Path Graph}
		\label{fig:graphs}
	\end{figure}
	
	
	\begin{definition}
		A starlike tree is a tree having atmost one node (root) with degree greater than two \cite{starlike-tree}. We consider starlike tree with height $\Bigl \lceil h/2 \Bigl \rceil$ to be \textit{balanced}, if there is atmost one leaf which is at a height less than $\Bigl \lceil h/2 \Bigl \rceil$ while the rest are all at a height $\Bigl \lceil h/2 \Bigl \rceil$ from the root. Figure \ref{fig:graphs}(a) depicts an example of a balanced starlike tree with $h=2$.
	\end{definition}
	
	\begin{definition}
		A path graph is a graph such that its nodes can be placed on a straight line. There are \st{not more than} only two nodes in a path graph which have degree one while the rest have a degree of two. Figure \ref{fig:graphs}(b) shows an example of a path graph \cite{path-graph}. 
	\end{definition}

	\begin{lemma}
		\label{lem:starlike-n}
		For a balanced starlike tree with height $h/2$, where $h$ is even, $n^{*}_{h}(\mathcal{T}_{N}) = \Bigl \lfloor \frac{N-1}{\frac{h}{2}} \Bigr \rfloor$, i.e., when the leaves are selected.
	\end{lemma}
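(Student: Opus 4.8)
The plan is to first pin down the combinatorial structure of a balanced starlike tree, then exhibit an explicit selection of $\lfloor\tfrac{N-1}{h/2}\rfloor$ nodes that are pairwise at least $h$ hops apart, and finally argue that no admissible selection can be larger. I would fix notation by writing the tree as a root $r_{0}$ together with branches $B_{1},\dots,B_{m}$, each branch being a path emanating from $r_{0}$; by the balancedness hypothesis at most one branch contains fewer than $\tfrac{h}{2}$ non-root nodes and every other branch contains exactly $\tfrac{h}{2}$ non-root nodes (its leaf sitting at graph-distance $\tfrac{h}{2}$ from $r_{0}$, using $\lceil h/2\rceil=h/2$ since $h$ is even). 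Writing $N-1 = q\cdot\tfrac{h}{2} + s$ with $0\le s < \tfrac{h}{2}$, this forces the tree to have exactly $q$ ``full'' branches, plus one ``short'' branch of $s$ non-root nodes when $s>0$; in particular $\lfloor\tfrac{N-1}{h/2}\rfloor = q$.

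For the lower bound I would take $S$ to be the set of the $q$ leaves of the full branches. Since the graph is a tree, the distance between two nodes lying on different branches of a starlike tree is the sum of their distances to $r_{0}$; hence any two elements of $S$ are at distance $\tfrac{h}{2}+\tfrac{h}{2}=h$, so $S$ is admissible and $n^{*}_{h}(\mathcal{T}_{N})\ge q$.

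For the matching upper bound, let $S$ be any set of nodes that is pairwise at distance $\ge h$. The key observation is that every node is at distance at most $\tfrac{h}{2}$ from $r_{0}$, with equality exactly for the leaves of full branches. Given distinct $u,v\in S$: if $u,v$ lie on a common branch, or if one of them is $r_{0}$, then $d(u,v)\le \tfrac{h}{2}<h$, a contradiction; hence $u$ and $v$ lie on different branches and $d(u,v)=d(u,r_{0})+d(v,r_{0})\ge h$ forces $d(u,r_{0})=d(v,r_{0})=\tfrac{h}{2}$, i.e.\ both are leaves of full branches. Therefore either $S=\{r_{0}\}$ (so $|S|=1\le q$ in any nondegenerate case $N\ge\tfrac{h}{2}+1$) or $S$ consists of full-branch leaves, at most one per branch, so $|S|\le q$. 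Combining the two bounds gives $n^{*}_{h}(\mathcal{T}_{N}) = q = \lfloor\tfrac{N-1}{h/2}\rfloor$, attained precisely by selecting the full-branch leaves (which coincide with ``the leaves'' when $s=0$).

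The main obstacle, where care is genuinely required, is the distance bookkeeping on the starlike tree together with the exact role of the balancedness assumption: one must check that the leftover $s$ non-root nodes, however attached, neither create a second node at distance $\tfrac{h}{2}$ from $r_{0}$ (forbidden, since that would be a second ``long'' branch contradicting balancedness) nor permit a cleverer admissible set mixing short-branch nodes with full-branch leaves. This is exactly why the optimum is the clean floor expression rather than a quantity sensitive to how the remaining $s$ nodes are arranged, and it is the step I would write out most carefully; a secondary, minor point is to record the nondegeneracy assumption $N\ge\tfrac{h}{2}+1$ under which $q\ge 1$ so that the trivial one-node selection does not interfere.
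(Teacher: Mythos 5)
Your proof is correct. Note that the paper itself states Lemma~\ref{lem:starlike-n} with no proof at all --- it is a bare assertion used as input to Lemma~\ref{lem:star} and Theorem~2 --- so there is no argument of the authors' to compare against; your write-up supplies the missing justification. The two halves are both sound: the lower bound via selecting the $q$ full-branch leaves (pairwise distance exactly $\tfrac{h}{2}+\tfrac{h}{2}=h$ through the root), and the upper bound via the observation that every node sits within $\tfrac{h}{2}$ of the root, so any two selected nodes must lie on distinct branches and both attain distance exactly $\tfrac{h}{2}$, forcing them to be full-branch leaves, of which there are only $q=\bigl\lfloor\frac{N-1}{h/2}\bigr\rfloor$. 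Your division-with-remainder bookkeeping correctly pins down that balancedness leaves exactly $q$ full branches plus at most one short one. The nondegeneracy caveat you flag ($q\ge 1$, so that the trivial singleton selection does not exceed the formula) is in fact automatic: a balanced starlike tree \emph{of height} $h/2$ must have at least one branch reaching that height, hence at least one full branch and $N\ge \tfrac{h}{2}+1$. The only residual fragility is inherited from the paper, not introduced by you: for $N-1<\tfrac{h}{2}$ the stated formula would give $0$ while a single node is always selectable, but that regime is excluded by the height hypothesis.
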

	
	\begin{lemma}
		\label{lem:star}
		Among all the possible trees $\mathcal{T}_{N}$ which have N vertices, the maximum $n^{*}_{h}(\mathcal{T}_{N})$ achievable is $ \Bigl \lfloor \frac{N-1}{\frac{h}{2}} \Bigr \rfloor$, which is obtained if the tree is a balanced starlike tree with height $h/2$ if $h$ is even.
		\begin{proof} 
			 To prove the lemma, we use induction. Here, the base case corresponds to a path graph $\mathcal{T}_{h+1}$ with $h+1$ nodes, a trivial case of starlike graph, as it has only $2$ nodes which are $h$ hops away. From the formula $\Bigl \lfloor \frac{N-1}{\frac{h}{2}} \Bigr \rfloor$, we get $n^{*}_{h}(\mathcal{T}_{h+1}) = 2$ which verifies the base case.

			 For any $N-1$, let us assume that the lemma is true, i.e., a balanced starlike tree with height $h/2$ achieves the maximum $n^{*}_{h}(\mathcal{T}_{N-1})$ for any tree $\mathcal{T}_{N-1}$ with $N-1$ vertices. Consider $\mathcal{T}^{*}_{N}$ to be the optimal-tree for $N$ nodes. From Lemma \eqref{lem:sub-graph-optimality}, we know that $\mathcal{T}^{*}_{N-1}$ is an induced subgraph of $\mathcal{T}^{*}_{N}$. This means that $\mathcal{T}^{*}_{N}$ can be obtained by adding a node to $\mathcal{T}^{*}_{N-1}$. Since we are constructing $\mathcal{T}^{*}_{N}$, we need to add a node to $\mathcal{T}^{*}_{N-1}$ such that maximum nodes can be selected which are atleast $h$ hops away. There are three possible structures for the tree $\mathcal{T}^{*}_{N-1}$ depending on the minimum height among all its branches: (a) minimum height among all the branches is less than $h/2-1$, (b) minimum height among all the branches is equal to $h/2-1$ and (c) minimum height among all the branches is equal to $h/2$. Although case (a) is not possible as we assumed $\mathcal{T}^{*}_{N-1}$ to be a balanced starlike tree, we consider it for the sake of completeness. For case (a), no matter where we add the node, $n^{*}_{h}(\mathcal{T}^{*}_{N})$ will not increase. However, we should add the node to the leaf of the branch with least height as it will allow the new leaf of that branch to be chosen in case the number of nodes in tree is increased to some $N^{'}>N$ such that height of that branch becomes $h/2$. For case (b), we should add the node to the leaf of the branch with least height so that its height becomes $h/2$ and the new leaf of that branch gets selected. For case (c), no matter where we add the node, $n^{*}_{h}(\mathcal{T}^{*}_{N})$ will not increase. Unlike case (a), we should add the new node to the root so as to start a new leaf which could be selected if that branch grows to a height $h/2$ for some $N^{'}>N$. For all the three cases, $\mathcal{T}^{*}_{N}$ is a balanced starlike tree as the new node is either added to the leaf of a branch if minimum height of a leaf is less than $h/2$ or to the root if the minimum height of the branches is $h/2$. Hence, by induction, the lemma is proved.
			
		\end{proof}
	\end{lemma}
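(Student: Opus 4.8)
The plan is to prove Lemma~\ref{lem:star} by induction on the number of vertices $N$, using Lemmas~\ref{lem:atmost-one} and~\ref{lem:sub-graph-optimality} to constrain how an optimum-tree grows when a vertex is added, and Lemma~\ref{lem:starlike-n} to read off $n^{*}_{h}$ on the resulting balanced starlike tree. The statement I would actually carry through the induction is the slightly stronger form ``there exists an optimum-tree on $N$ vertices that is a balanced starlike tree of height $h/2$, and its value is $\lfloor (N-1)/(h/2) \rfloor$'', since Lemma~\ref{lem:sub-graph-optimality} is phrased as such an existence statement and optimum-trees need not be unique.

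For the base case I would take $N=h+1$, the smallest size at which a tree can contain two vertices $h$ hops apart. Such a tree must be the path graph $\mathcal{T}_{h+1}$ (a degenerate starlike tree), whose two endpoints realize $n^{*}_{h}=2$; and $\lfloor (N-1)/(h/2)\rfloor = \lfloor h/(h/2)\rfloor = 2$, so the claim holds at the base.

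For the inductive step, assume the claim for $N-1$, so $\mathcal{T}^{*}_{N-1}$ may be taken to be a balanced starlike tree of height $h/2$ with $n^{*}_{h}(\mathcal{T}^{*}_{N-1}) = \lfloor (N-2)/(h/2)\rfloor$. By Lemma~\ref{lem:sub-graph-optimality}, an optimum-tree $\mathcal{T}^{*}_{N}$ can be obtained by attaching a single new leaf to $\mathcal{T}^{*}_{N-1}$. I would then case-split on the minimum branch height $\ell_{\min}$ of the starlike tree $\mathcal{T}^{*}_{N-1}$, measured from its root: (i) $\ell_{\min} < h/2-1$, vacuous under the hypothesis but included for completeness, where the leaf is attached to extend the shortest branch; (ii) $\ell_{\min} = h/2-1$, where attaching the leaf to the shortest branch raises that branch to height $h/2$ and makes its new leaf selectable, increasing $n^{*}_{h}$ by exactly one; (iii) $\ell_{\min} = h/2$, where every branch is already ``full'' so no attachment can increase $n^{*}_{h}$, and one attaches the leaf at the root to spawn a fresh branch. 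In every case the resulting tree is again a balanced starlike tree of height $h/2$; applying Lemma~\ref{lem:starlike-n} to it gives $n^{*}_{h} = \lfloor (N-1)/(h/2)\rfloor$, and since $\mathcal{T}^{*}_{N}$ is an optimum-tree this is simultaneously the maximum over all $N$-vertex trees, closing the induction.

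The main obstacle will be making this case analysis genuinely rigorous rather than merely plausible. One has to argue that no placement of the new leaf beats the prescribed one: Lemma~\ref{lem:atmost-one} supplies the quantitative ceiling (``at most one more''), and one must then check that the prescribed placement attains that ceiling exactly when the arithmetic permits it, namely when $N-1$ crosses a multiple of $h/2$, and attains the same value otherwise. One also has to show that any attachment other than extending a minimal branch or spawning a new branch at the root either fails to improve $n^{*}_{h}$ or destroys the balanced starlike structure without compensating gain, so that an optimum-tree on $N$ vertices can indeed be taken balanced starlike. The non-uniqueness of optimum-trees, flagged above, is the reason the whole argument should be run on the ``there exists a balanced starlike optimum-tree'' form, which is exactly what keeps it compatible with Lemma~\ref{lem:sub-graph-optimality}.
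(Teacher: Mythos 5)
Your proposal is correct and follows essentially the same route as the paper's proof: induction on $N$ with the path graph $\mathcal{T}_{h+1}$ as base case, Lemma~\ref{lem:sub-graph-optimality} to reduce the step to attaching one leaf to a balanced starlike $\mathcal{T}^{*}_{N-1}$, the identical three-way case split on the minimum branch height, and Lemma~\ref{lem:starlike-n} to read off the value. Your added caveats about non-uniqueness of optimum-trees and about verifying that no alternative leaf placement does better are reasonable points of rigor that the paper's own proof also leaves implicit.
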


	\begin{figure}[!ht]
	\centering
	\includegraphics[width=0.4\textwidth]{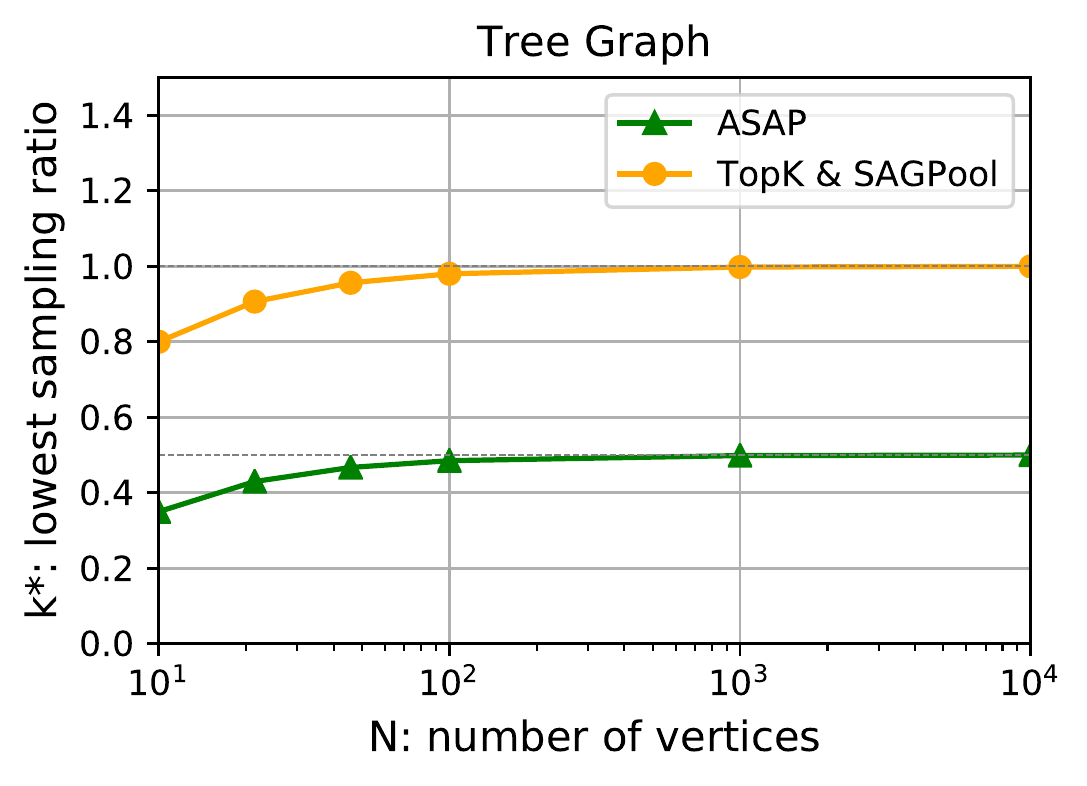}
	\caption{\label{fig:tree-graph} Minimum sampling ratio $k^{*}$ vs $N$ for Path-Graph}
	\end{figure}

	\textbf{Theorem 2.}
	\textit{
		Let the input graph $\mathcal{T}$ be a tree of any possible structure with $N$ nodes. Let $k^{*}$ be the lower bound on sampling ratio $k$ to ensure the existence of atleast one edge in the pooled graph irrespective of the structure of $\mathcal{T}$ and the location of the selected nodes. For TopK or SAGPool, $k^{*} \rightarrow 1$ whereas for ASAP, $k^{*} \rightarrow 0.5$ as $N \rightarrow \infty$.
	}
	\begin{proof}
		From Lemma \eqref{lem:star} and \eqref{lem:starlike-n}, we know that among all the possible trees $\mathcal{T}_{N}$ which have N vertices, the maximum $n^{*}_{h}(\mathcal{T}_{N})$ achievable is $ \Bigl \lfloor \frac{N-1}{\frac{h}{2}} \Bigr \rfloor$. Using pigeon-hole principle we can show that for a pooling method with $RF^{edge}$ if the number of sampled clusters is greater than $n^{*}_{RF^{edge} }(\mathcal{T}_{N})$ then there will always be an edge in the pooled graph irrespective of the position of the selected clusters: 
		\begin{equation}
		\begin{split}
		\label{eq:star-graph}
		\lceil k^{*}N \rceil &> \Bigl \lfloor \frac{N-1}{\frac{ RF^{edge}+1}{2}} \Bigr \rfloor  \\
		k^{*}N &> \Bigl \lfloor \frac{N-1}{\frac{RF^{edge}+1}{2}} \Bigr \rfloor \\
		k^{*}N + 1 &> \frac{N-1}{\frac{RF^{edge}+1}{2}}
		\end{split}
		\end{equation}
		Let us consider 1-hop neighborhood for pooling, i.e., $h = 1$. Substituting $RF^{edge} = h$ in Eq. \eqref{eq:star-graph} for TopK and SAGPool we get:
		\begin{equation}
		\nonumber
		k^{*} > 1 - \frac{2}{N}
		\end{equation}
		and as N $\rightarrow \infty$ we obtain $k^{*} \rightarrow 1$. Substituting $RF^{edge} = 2h+1$ in Eq. \eqref{eq:star-graph} for ASAP we get:
		\begin{equation}
		\nonumber
		k^{*} > \frac{1}{2} - \frac{3}{2N}
		\end{equation}
		and as N $\rightarrow \infty$ we obtain $k^{*} \rightarrow 0.5$
	\end{proof}


	\subsection{\Large Similar Analysis for Path Graph}
	\label{ssec:proof-3}

	\begin{lemma}
		\label{lem:path-graph-n}
		For a path graph $\mathcal{G}_{path}$ with $N$ nodes, $n^{*}_{h}({G}_{path}) = \lceil \frac{N}{h} \rceil$.
	\end{lemma}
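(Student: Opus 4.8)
The plan is to prove the equality $n^{*}_{h}(\mathcal{G}_{path}) = \lceil N/h \rceil$ by establishing matching lower and upper bounds, working directly from the observation that in a path graph on vertices $1,2,\dots,N$ listed along the line, the hop-distance between vertices $i$ and $j$ equals $|i-j|$. So the quantity $n^{*}_{h}(\mathcal{G}_{path})$ is exactly the largest size of a subset $S\subseteq\{1,\dots,N\}$ whose elements are pairwise at least $h$ apart.

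For the lower bound I would exhibit an explicit valid selection: take the vertices at positions $1,\,1+h,\,1+2h,\dots,\,1+(m-1)h$ with $m=\lceil N/h\rceil$. Since $\lceil N/h\rceil \le (N-1)/h + 1$, the largest position satisfies $1+(m-1)h \le 1+(N-1) = N$, so every chosen position is a genuine vertex of the path; consecutive chosen vertices are exactly $h$ hops apart and all other pairs are farther, so this set is admissible and gives $n^{*}_{h}(\mathcal{G}_{path}) \ge \lceil N/h\rceil$.

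For the upper bound, let $v_1<v_2<\dots<v_t$ be any admissible selection. Then $v_{i+1}-v_i\ge h$ for each $i$, so telescoping yields $v_t-v_1\ge (t-1)h$; combined with $1\le v_1$ and $v_t\le N$ this gives $(t-1)h\le N-1$, hence $t\le 1+\frac{N-1}{h}$, and since $t$ is an integer, $t\le 1+\big\lfloor\tfrac{N-1}{h}\big\rfloor=\lceil N/h\rceil$ by the standard floor/ceiling identity. (An alternative is the pigeonhole route: partition the path into $\lceil N/h\rceil$ consecutive blocks of length at most $h$, so each block contains at most one selected vertex.) Together with the lower bound this proves the lemma.

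I do not anticipate any genuine obstacle; the only point needing care is the integer bookkeeping, in particular verifying $1+\big\lfloor\tfrac{N-1}{h}\big\rfloor=\lceil N/h\rceil$ for all positive integers $N,h$ and confirming that the extremal selection stays inside $\{1,\dots,N\}$. Both follow from a short case split, e.g. writing $N=qh+r$ with $0\le r<h$.
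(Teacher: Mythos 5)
Your proof is correct and complete. For the record, the paper states Lemma~\ref{lem:path-graph-n} without any proof at all (as it does for Lemma~\ref{lem:starlike-n}), so there is no authorial argument to compare against; your two-sided argument supplies exactly what is missing. The lower bound via the explicit selection $1, 1+h, \dots, 1+(\lceil N/h\rceil-1)h$ and the check that the last position does not exceed $N$ are fine, and the upper bound via telescoping $v_t - v_1 \ge (t-1)h$ together with the identity $1+\lfloor (N-1)/h\rfloor = \lceil N/h\rceil$ (or equivalently the block-pigeonhole partition into $\lceil N/h\rceil$ intervals of length at most $h$) is airtight. The only substantive prerequisite is the observation that hop distance on a path equals $|i-j|$ under the natural labeling, which you state explicitly, so the definition of $n^{*}_{h}$ as ``pairwise at least $h$ apart'' translates correctly into the combinatorial problem you solve.
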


	\begin{lemma}
		\label{lem:path-graph}
		Consider the input graph to be a path graph $\mathcal{G}_{path}$ with $N$ nodes. To ensure that a pooling operator with $RF^{edge}$ and sampling ratio $k$ has at least one edge in the pooled graph, irrespective of the location of selected clusters, we have the following inequality on k: $k \geq (\frac{1}{RF^{edge}+1} + \frac{1}{N})$.
	\end{lemma}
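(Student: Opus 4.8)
The plan is to reuse the pigeonhole argument from the proof of Theorem 2, but with the extremal-tree bound replaced by the exact count for a path graph given in Lemma \ref{lem:path-graph-n}. The first step is to pin down exactly when the pooled graph contains an edge. From the ``Maintaining Graph Connectivity'' discussion and Eq. \eqref{eq:stas}, a pooling operator with edge receptive field $RF^{edge}$ makes two selected clusters (medoids) $v_i$ and $v_j$ adjacent in the pooled graph precisely when the distance between $v_i$ and $v_j$ in $\mathcal{G}_{path}$ is at most $RF^{edge}$ (an overlap of their $h$-hop clusters, or an edge between their constituent nodes, is exactly what the $RF^{edge}$-hop bound captures). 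Hence the pooled graph contains \emph{no} non-self-loop edge if and only if every pair of selected medoids is at distance at least $RF^{edge}+1$.

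Next, by the definition of optimum-nodes, the largest set of vertices of $\mathcal{G}_{path}$ that is pairwise at distance $\ge RF^{edge}+1$ has size $n^{*}_{RF^{edge}+1}(\mathcal{G}_{path})$, which by Lemma \ref{lem:path-graph-n} equals $\lceil \tfrac{N}{RF^{edge}+1}\rceil$. A pooling operator with sampling ratio $k$ selects $\lceil kN\rceil$ clusters; by the pigeonhole principle, if $\lceil kN\rceil > \lceil \tfrac{N}{RF^{edge}+1}\rceil$, then these medoids cannot all be pairwise at distance $\ge RF^{edge}+1$, so some pair lies within $RF^{edge}$ hops and therefore yields an edge in the pooled graph, regardless of which clusters were selected.

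It then remains to convert $\lceil kN\rceil > \lceil \tfrac{N}{RF^{edge}+1}\rceil$ into the stated bound on $k$. Since $\lceil \tfrac{N}{RF^{edge}+1}\rceil < \tfrac{N}{RF^{edge}+1}+1$ and $\lceil kN\rceil \ge kN$, it is enough to require $kN \ge \tfrac{N}{RF^{edge}+1}+1$, i.e. $k \ge \tfrac{1}{RF^{edge}+1}+\tfrac{1}{N}$, which is exactly the claimed inequality.

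The two delicate points are (i) justifying the ``no edge $\iff$ all pairs of selected medoids at distance $\ge RF^{edge}+1$'' equivalence directly from the definition of $RF^{edge}$ and Eq. \eqref{eq:stas}, and (ii) doing the ceiling arithmetic carefully enough that the final conclusion carries a ``$\ge$'' rather than a strict ``$>$'': it is precisely the slack from $\lceil kN\rceil \ge kN$ combined with $\lceil y\rceil < y+1$ that makes the non-strict inequality correct. Everything else is the same bookkeeping as in the proof of Theorem 2.
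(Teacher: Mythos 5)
Your proof is correct and follows essentially the same route as the paper: apply Lemma \ref{lem:path-graph-n} to bound the largest set of pairwise-far-apart medoids by $\lceil N/(RF^{edge}+1)\rceil$, invoke the pigeonhole principle, and convert the resulting ceiling inequality into the stated bound on $k$. Your version is in fact slightly more careful than the paper's (which writes $n^{*}_{RF^{edge}}$ where $n^{*}_{RF^{edge}+1}$ is meant, and is looser in the ceiling manipulations), but the underlying argument is identical.
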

	\begin{proof}
		From Lemma \eqref{lem:path-graph-n}, we know that $n^{*}_{RF^{edge}}({G}_{path}) = \lceil \frac{N}{RF^{edge}} \rceil$. Using pigeon-hole principle we can show that for a pooling method with $RF^{edge}$, if the number of sampled clusters is greater than $n^{*}_{RF^{edge}}({G}_{path})$, then there will always be an edge in the pooled graph irrespective of the position of the selected clusters: 
		\begin{equation}
		\label{eq:path-graph-pigeon}
		\begin{split}
		\lceil kN \rceil &> \Bigl \lceil \frac{N}{RF^{edge}+1} \Bigr \rceil \\
		kN &> \Bigl \lceil \frac{N}{RF^{edge}+1} \Bigr \rceil \\
		kN &\geq \frac{N}{RF^{edge}+1} + 1
		\end{split}
		\end{equation}
		From Eq. \eqref{eq:path-graph-pigeon}, we get $k \geq (\frac{1}{RF^{edge}+1} + \frac{1}{N})$ which completes the proof.
	\end{proof}

	\begin{figure}[!ht]
	\centering
	\includegraphics[width=0.4\textwidth]{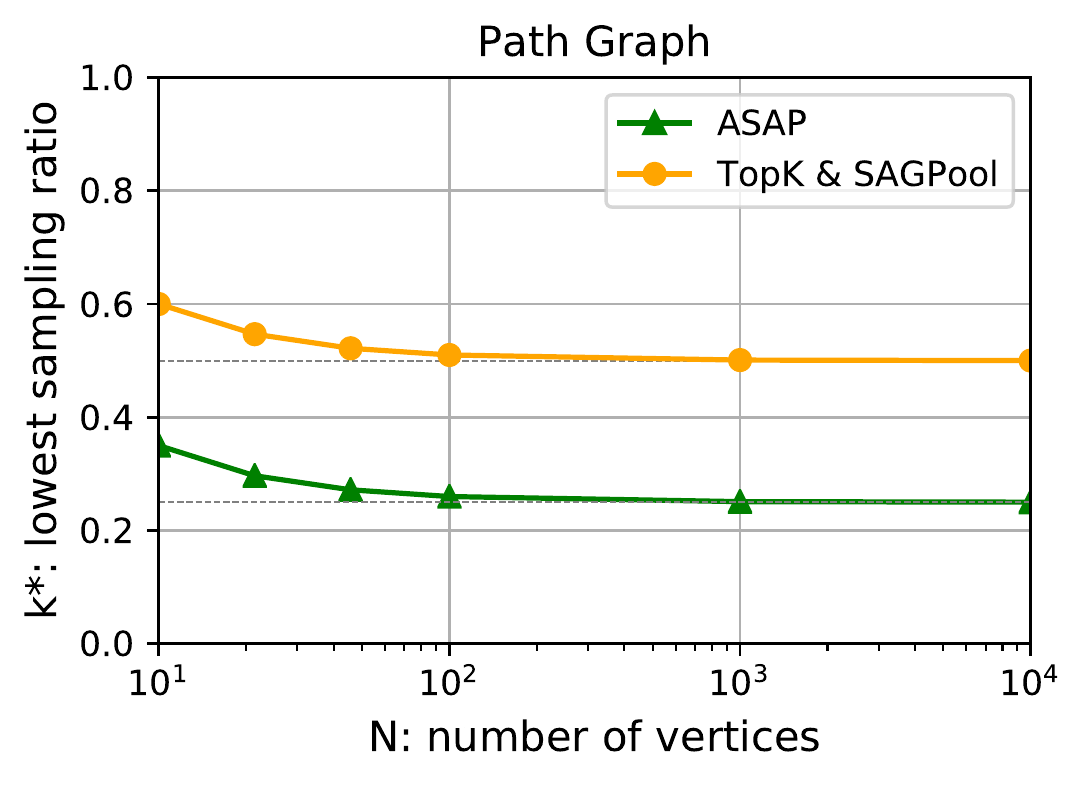}
	\caption{\label{fig:path-graph} Minimum sampling ratio $k^{*}$ vs $N$ for Path-Graph}
	\end{figure}

	\textbf{Theorem 3.}
	\textit{
		Consider the input graph to be a path graph with $N$ nodes. Let $k^{*}$ be the lower bound on sampling ratio $k$ to ensure the existence of atleast one edge in the pooled graph. For TopK or SAGPool,  $k^{*} \rightarrow 0.5$ as $N \rightarrow \infty$ whereas for ASAP, $k^{*} \rightarrow 0.25$ as $N \rightarrow \infty$.
	}
	\begin{proof}
		From Lemma \eqref{lem:path-graph}, we get $k^{*} = \frac{1}{RF^{edge}+1} + \frac{1}{N}$. Using 	h = 1 for TopK and SAGPool when N tends to infinity i.e. $k^{*} = \lim_{N \rightarrow \infty} \frac{1}{2} + \frac{1}{N}$, we get $k^{*} \rightarrow 0.25$. Using h = 3 for ASAP when N tends to infinity i.e. $k^{*} = \lim_{N \rightarrow \infty} \frac{1}{4} + \frac{1}{N}$, we get $k^{*} \rightarrow 0.25$.
	\end{proof}

	\subsection{Graph Connectivity via $k^{th}$ Graph Power.}
	\label{kth-power}
	To minimize the possibility of nodes getting isolated in pooled graph, TopK employs $k^{th}$ graph power i.e. $\hat{A}^{k}$ instead of $\hat{A}$. This helps in increasing the density of the graph before pooling. While using $k^{th}$ graph power, TopK can connect two nodes which are atmost $k$ hops away whereas ASAP in this setting can connect upto $k+2h$ hops in the original graph.  As $h\geq1$, ASAP will always have better connectivity given $k^{th}$ graph power.

	\section{Graph Permutation Equivariance}
	\label{ssec:perm-eq-proof}
	
	Given a permutation matrix $P \in \{0, 1\}^{n \times n}$ and a function $f(X, A)$ depending on graph with node feature matrix $X$ and adjacency matrix $A$, graph permutation is defined as $f(PX, PAP^{T})$, node permutation is defined as $f(PX, A)$ and edge permutation is defined as $f(X, PAP^{T})$.
	Graph pooling operations should produce pooled graphs which are isomorphic after graph permutation i.e., they need to be graph permutation equivariant or invariant. We show that ASAP has the property of being graph permutation equivariant.\\
	
	\textbf{Proposition 1.}
	\textit{ASAP is a graph permutation equivariant pooling operator.}
	\begin{proof}
		Since $S$ is computed by an attention mechanism which attends to all edges in the graph, we have:
		\begin{equation}
		\label{eq:graph-eq-S}
		S \rightarrow PSP^{T}
		\end{equation}
		Selecting top $\lceil kN \rceil$ clusters denoted by indices $i$, changes $\hat{S}$ as:
		\begin{equation}
		\label{eq:graph-eq-shat}
		\hat{S} \rightarrow P\hat{S}(P[i, i])^{T}    
		\end{equation}
		Using Eq. \eqref{eq:graph-eq-shat} and    
		$\hat{S} = S(:, \hat{i}), X^{p} = \hat{X}^{c}(\hat{i})$, we can write:
		\begin{equation}
		\label{eq:graph-Xp}
		X^{p} \rightarrow P[i, i]X^{p}
		\end{equation}
		Since $A^{p} = \hat{S}^{T} \hat{A}^c \hat{S}$ and $\hat{S} = S(:, \hat{i}), X^{p} = \hat{X}^{c}(\hat{i})$, we get:
		\begin{equation}
		\label{eq:graph-Ap}
		A^{p} \rightarrow P[i, i]A^{p}(P[i, i])^{T}
		\end{equation}
		From Eq. \eqref{eq:graph-Xp} and Eq. \eqref{eq:graph-Ap}, we see that graph permutation does not change the output features. It only changes the order of the computed feature and hence is isomorphic to the pooled graph.
	\end{proof}

	\section{Pseudo Code}
	\label{sec:pseudocode}
	Algorithm \ref{alg:ASAP} is a pseudo code of ASAP. The Master2Token working is explained in Algorithm \ref{alg:M2T}.
	
	\begin{algorithm}[!ht]
		\caption{ASAP algorithm}
		\label{alg:ASAP}
		\SetKwInOut{Input}{Input}\SetKwInOut{Output}{Output}\SetKwInOut{Intermediate}{Intermediate}
		\Input{~Graph $G(\V,\E)$; Node features $X$; Weighted adjacency matrix $A$; Master2Token attention function $\textsc{Master2Token}$; Local Extrema Convolution operator $\textsc{LEConv}$; pooling ratio $k$; Top-k selection operator \textsc{TopK}; non-linearity $\sigma$ }
		\Intermediate{~Clustered graph $G^c(\V,\E)$ with node features $X^c$ and weighted adjacency matrix $A^c$; Cluster assignment matrix $S$; Cluster fitness vector $\Phi$}
		\Output{~Pooled graph $G^p$ with node features $X^p$ and weighted adjacency matrix $A^p$}
		\BlankLine
		$X^c, S \leftarrow \textsc{Master2Token}(X, A)$\;
		$A^c \leftarrow A$\;
		$\Phi \leftarrow \textsc{LEConv}(X^c, A^c)$\;
		$\hat{X^c} \leftarrow \Phi \odot X^c $\;
		$\hat{i} \leftarrow \textsc{TopK}(\Phi, k)$\;
		$\hat{S} \leftarrow S(:,\hat{i})$\;
		$X^p \leftarrow \hat{X^c}(\hat{i},:)$\;
		$A^p \leftarrow \hat{S}^{T} \hat{A}^c \hat{S}$
	\end{algorithm}

\makeatletter
\newcommand{\removelatexerror}{\let\@latex@error\@gobble}
\makeatother

\newcommand{\myalgorithm}{%
	\begingroup
	\removelatexerror
	\begin{algorithm*}[H]
		
		\caption{\textsc{Master2Token} algorithm}
		\label{alg:M2T}
		\SetKwInOut{Input}{Input}\SetKwInOut{Output}{Output}
		\Input{~Graph $G(\V,\E)$; Node features $X$; Weighted adjacency matrix $A$; Graph Convolution operator $\textsc{GCN}$; Weight matrix $W$, weight vector $\Vec{w}$; softmax function $softmax$; Cluster neighborhood function $c_{h}$ }
		\Output{~Clustered graph node features $X^c$; Cluster assignment matrix $S$}
		\BlankLine
		$X' \leftarrow \textsc{GCN}(X, A)$\;
		\For{$i=1...|\V|$}
		{
			$x_{i}^{c} \leftarrow \Vec{0}$\;
			$m_i \leftarrow \max_{j \in c_{h}(v_{i})}(x_{j}').$\;
			\For{$j \in c_{h}(v_{i})$}
			{
				$\alpha_{i, j} \leftarrow softmax(\Vec{w}^{T}\sigma(W m_{i} \mathbin\Vert x_{j}'))$\;
				$S_{i, j} \leftarrow \alpha_{i, j}$\;
				$x_{i}^{c} \leftarrow x_{i}^{c} + \alpha_{i,j} x_{j}$\;
			}
		}
	\end{algorithm*}
	\endgroup}

\myalgorithm

\end{document}